\definecolor{aqaqaq}{rgb}{0.6274509803921569,0.6274509803921569,0.6274509803921569}
\lstdefinelanguage{pseudocode}{
	morekeywords={if, return, Input, Output, True, False, For, not}
}
\lstdefinestyle{pseudocode}{
	language=pseudocode,
	keywordstyle=\bfseries
}
\def\eps{\varepsilon}
\def\cA{{\mathcal A}}
\def\cF{{\mathcal F}}
\def\cG{{\mathcal G}}
\def\cH{{\mathcal H}}
\def\cX{{\mathcal X}}
\def\nset{\mathbb N}
\def\bA{\mathbf{A}}
\def\bR{\mathbf{R}}
\def\tbR{\widetilde{\mathbf{R}}}
\def\bX{\mathbf{X}}
\def\bY{\mathbf{Y}}
\def\bV{\mathbf{V}}
\def\obX{{\overline{\mathbf{X}}}}
\def\tcF{{\widetilde{\mathcal{F}}}}
\DeclareMathOperator{\dist}{dist}
\DeclareMathOperator{\layer}{layer}
\DeclareMathOperator{\Do}{do}
\DeclareMathOperator{\An}{An}
\DeclareMathOperator{\Pa}{Pa}
\DeclareMathOperator{\tmin}{t_{min}}
\DeclareMathOperator{\tmax}{t_{max}}
\title{Causal Identification in Time Series Models}
\begin{document}

\maketitle

\begin{abstract}
  In this paper, we analyze the applicability of the Causal Identification algorithm to causal time series graphs with latent confounders. Since these graphs extend over infinitely many time steps, deciding whether causal effects across arbitrary time intervals are identifiable appears to require computation on graph segments of unbounded size. Even for deciding the identifiability of intervention effects on variables that are close in time, no bound is known on how many time steps in the past need to be considered. We give a first bound of this kind that only depends on the number of variables per time step and the maximum time lag of any direct or latent causal effect. More generally, we show that applying the Causal Identification algorithm to a constant-size segment of the time series graph is sufficient to decide identifiability of causal effects, even across unbounded time intervals. 
\end{abstract}

\begin{keywords}
  Causal Graphs, Time Series, Causal Identification
\end{keywords}

\section{Introduction}

Causal inference on time series data has many important applications across fields such as economics, finance, earth and climate science \citep{Runge-earth-science, Moraffah21, ZHANG2014}. In these domains, conducting large-scale experiments to determine causal effects is often impractical or impossible, but large observational time-series data sets are frequently available. Developing statistical methods that leverage these datasets to infer causal relationships is thus an important problem. Causal time series graphs (also called Dynamic Bayesian Networks) provide a well-studied model for this purpose \citep{Assaad22}. The vertices of these graphs represent a set of observable random variables recurring at each non-negative integer time, and the edges represent direct causal effects or latent confounding. The key assumption for these graphs is that the causal structure is time-invariant, that is, edges are invariant under time shifts. However, the joint distributions of sets of variables need not be time-invariant. Numerous methods have been explored for causal discovery and causal feature selection in these graphs, both in the setting without latent variables \citep{runge_PCMCI, Pfister19} and with latent variables \citep{malinsky-spirtes18, mastakouri21, Runge-LPCMCI}.

In many cases, models without latent variables can be overly simplistic. Particularly for time series data, latent confounding may not only result from unobserved covariates, but can also arise when the time resolution of the observations is too coarse, even when all relevant variables are observed \citep{PetersJS17, Runge-survey}. In this paper we focus on causal time series graphs with latent variables. In this setting, even when a causal graph is discovered, the problem of causal identification remains. It is well-known that some causal effects in certain graphs may be unidentifiable \citep{Pea09}. For finite graphs the question of causal identifiability is answered precisely by the Causal ID algorithm \citep{tian_general_2002,shpitserP06,huangV09,shpitser08}. However, in infinite periodic graphs representing a time series model, causal identification is less well-studied. The only existing work on causal identifiability in these graphs that we are aware of does not address the challenges described below \citep{Blondel_Arias_Gavaldà_2016}. Other work on causal identifiability in time series settings is based on Granger causality \citep{Eichler07,Eichler_Didelez_2009}, which is known to have certain limitations due to violations of its assumptions \citep{PetersJS17}. One might argue that the direction of time allows the standard ID algorithm to be applied even to infinite time series graphs. Indeed, one can show that identifying the causal effect of a variable $X$ on $Y$ requires considering only the section of the graph up to the time at which $Y$ occurs, see Section~\ref{sec:results}. However, the following problems remain: 
\begin{enumerate}
    \item While the causal effect of $X$ on $Y$ is not affected by variables that lie in the future of $Y$, there may be chains of confounding variables in the past that obstruct identifiability (see Section~\ref{sec:results} for an example). Hence, computing the identifiability of $P(Y| \Do X)$ would naively require running the Causal ID algorithm on the entire past of the time series, from its initialization up to the occurrence of $Y$. Consider, for instance, a causal time series graph modeling Earth's climate system. Using the ID algorithm to check the identifiability of the causal effect between two variables in the near-present, would require computation scaling with the length of Earth's history, which is clearly infeasible. For causal discovery in such settings, one often just considers the past up to a maximum time interval $\tau_{\max}$. However, $\tau_{\max}$ is typically chosen empirically and this may distort causal effects (see \citet{Runge-LPCMCI}, Section 2.3). A priori, the effect of such a cut-off on causal identification is unclear.

    \item  Suppose we are interested in the causal effect between recurring variables $X$ and $Y$ but do not know the time span over which this effect might occur. A causal time series graph employed to investigate this effect should have the property that the causal effect of $X_t$ on $Y_{t+\Delta}$ is identifiable for all time steps $\Delta \geq 0$ in the future. However, the Causal ID algorithm cannot decide in finite computing time whether a graph has this property. The only option seems to be to restrict the question of identifiability up to a maximum time step $\Delta_{\max}$, which would still require increasing computing time in $\Delta_{\max}$.
\end{enumerate}

In this paper, we provide solutions to both problems through the following result. 

\begin{theorem}\label{thm:main}
    Let $\cG$ be a periodic causal graph and let $\bX, \bY \subseteq \bV(\cG)$ be disjoint subsets of variables. Let $t_{\min}$ be the smallest time index of a variable in $\bX$ and $t_{\max}$ the largest time index of a variable in $\bY$. Then, there exists a constant $C$ such that the following holds:
    \begin{enumerate}
        \item Running the Causal ID algorithm on the segment of $\cG$ with time indices between $t_{\min} - C$ and $t_{\max}$ suffices to determine identifiability of the causal effect of $\bX$ on $\bY$. \label{item:a}
        \item If the causal effect of $\bX$ on $\bY$ is unidentifiable, then there is a time shift $\Delta$ such that the time difference between $\bX$ and $\bY_{-\Delta}$ is at most $C$ and the causal effect of $\bX$ on $\bY_{-\Delta}$ is also unidentifiable. \label{item:b}
    \end{enumerate}   
\end{theorem}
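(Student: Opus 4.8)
The plan is to reduce identifiability to the hedge criterion of \citet{shpitserP06}: $P(\bY \mid \Do \bX)$ is identifiable in a graph exactly when that graph contains no hedge, a hedge being a pair of $R$-rooted C-forests $F' \subseteq F$ (with $F \neq F'$) such that $R \subseteq \An(\bY)$ in the graph with $\bX$ deleted, $F \cap \bX \neq \emptyset$, and $F' \cap \bX = \emptyset$. Two easy reductions set things up. First, since in a C-forest every vertex is an ancestor of some root and the roots lie in $\An(\bY)$, any hedge is contained in $\An_\cG(\bY)$, all of whose vertices have time index at most $t_{\max}$, so the portion of $\cG$ after time $t_{\max}$ is irrelevant (this is the argument already indicated in Section~\ref{sec:results}). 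Second, any hedge contained in an induced subgraph $\cG[W]$ is also a hedge in $\cG$, because taking induced subgraphs preserves C-forest structure and can only shrink ancestor sets. Together these reductions mean that to prove part~\ref{item:a} it is enough to show that whenever $\cG$ has a hedge for $(\bX,\bY)$ it has one confined to time indices $[t_{\min}-C,t_{\max}]$.

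The core of the proof is a pumping argument exploiting time-invariance. Fix a hedge $(F,F')$ with $|F|$ minimum, let $\tau$ be the maximum time lag of any directed or bidirected edge and $d$ the number of variables per time step, and attach to each time index $t$ a \emph{local type}: the isomorphism class of the restriction of $(F,F')$ to the band of vertices at times in a window of width $O(\tau)$ around $t$, recording $F$- and $F'$-membership, the root labels, all directed and bidirected edges incident to that band, and whether each vertex's unique directed path toward a root leaves the band upward. There are only $N=N(d,\tau)$ local types. Because $F$ is a single C-component, its occupied time steps have no gap wider than $\tau$; and because $F\subseteq\An_\cG(\bY)$, they connect upward to time $\geq t_{\min}$. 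So if $F$ reached below time $t_{\min}-C$ for $C$ a suitable multiple of $N$, two time steps $t_1<t_2$ below $t_{\min}$ would share a local type. Using time-invariance, one excises the slices of $\cG$ (and of $F$, $F'$) at times in $\{t_1+1,\dots,t_2\}$ and glues the part below time $t_1$ to the part above time $t_2$ shifted down by $t_2-t_1$; the matching local types guarantee the glued object is again a hedge — directed edges still form a forest, the graph is still a single C-component, every vertex still reaches a root — while $\bX$, $\bY$ and the roots are untouched, since they sit at times $\geq t_{\min}$. This contradicts minimality of $|F|$, so the minimal hedge lies in the band of times $[t_{\min}-C,t_{\max}]$, which gives part~\ref{item:a}.

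For part~\ref{item:b}, run the same pumping argument in the interior of the hedge rather than in its tail. Assuming non-identifiability, take the minimal hedge from part~\ref{item:a}; if the number of time steps between the latest vertex of $\bX$ and the earliest vertex of $\bY$ exceeds $N(d,\tau)$, pick $t_1<t_2$ strictly between them with equal local type and excise $\{t_1+1,\dots,t_2\}$ as before. The $\bX$-side of the hedge is unchanged and the $\bY$-side is shifted down by $\Delta:=t_2-t_1$, so the result is a hedge for $P(\bY_{-\Delta}\mid \Do\bX)$, which is therefore unidentifiable; repeating until the gap drops to at most $N(d,\tau)$ gives the claimed shift (choosing $C$ large enough to also absorb the fixed time-spans of $\bX$ and $\bY$, and checking that $\bY_{-\Delta}$ stays at nonnegative times and disjoint from $\bX$). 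The one genuinely delicate point throughout is the design of the local type: it must carry precisely enough of the directed-forest, C-component, and root-ancestry data that equal types at $t_1,t_2$ certify the excised graph is again a hedge. Making that bookkeeping go through is where the work lies; everything else is finiteness of the type set together with the pigeonhole principle.
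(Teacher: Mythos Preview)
Your overall strategy---reduce to the hedge criterion, attach a finite-valued ``state'' to each time layer, apply pigeonhole, and excise the slab between two matching states---is exactly the paper's approach. The gap is in your choice of state. You propose a genuinely \emph{local} type, determined by the restriction of $(F,F')$ to a window of width $O(\tau)$ around $t$. This is not enough to certify that the glued object is again a C-component: two vertices of $F$ at layer $t$ may be connected only by a bidirected path that wanders arbitrarily far into the past, and that connectivity is invisible in any fixed-width window. The paper's state instead records, for each of $\cF$ and $\cF'$, the partition of layer $t$ into \emph{left-connected components} (components under bidirected paths that stay at times $\le t$), together with $\beta(t)=\bX_{*,t}\cap\An(\bY,\cG_{\obX})$. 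The left-connected partition is a summary of the \emph{entire} past, not of a local window, and it is precisely what makes the C-connectedness argument survive excision; $\beta$ is what guarantees that roots of the new hedge remain ancestors of (the shifted) $\bY$. Your type carries neither piece. Relatedly, your assertion that the roots ``sit at times $\geq t_{\min}$'' and are therefore untouched by the cut in part~\ref{item:a} is unjustified: a C-forest can contain an entire in-tree, root included, lying strictly before $t_{\min}$ and attached to the rest of $\cF$ only by bidirected edges, so the ancestry condition $\beta$ is genuinely needed even there.

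The fix is not to widen the window but to replace locality by a finite-valued \emph{global} summary: the left-connected partition of layer $t$ takes at most $(w+1)^{w+1}$ values regardless of how far back the witnessing bidirected paths reach, and $\beta(t)$ takes at most $2^{w}$ values. With these three pieces (one partition for $\cF$, one for $\cF'$, and $\beta$) your excision-and-glue argument goes through verbatim, and after the standard reduction from latency $L$ to latency $1$ one recovers the paper's constant $C=L\cdot 2^{Lw}(Lw+1)^{2Lw+2}$.
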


See Section~\ref{sec:preliminaries} for formal definitions of the notation used above. Our result implies that identifiability of the causal effect between any two sets of variables can be decided in computing time that scales with $C$ but not with the size of the time interval between these variables or the distance to the initialization time of the causal model, see Figure~\ref{fig:main}. Here, $C$ depends only on the number of variables per time step and the maximum time lag of any direct or latent causal effect. In Section~\ref{sec:results} we show specific upper and lower bounds on the scaling of $C$. While our upper bound exhibits exponential growth, it is only a first generic bound, applicable to any periodic graph structure. For specific graphs, $C$ could potentially be much smaller. In fact, we are only able to construct graphs that require a linear scaling of $C$ in the number of variables per time step, and we present this construction together with the proofs of our results in Section~\ref{sec:results}.

\usetikzlibrary{arrows.meta}
\usetikzlibrary{decorations.pathreplacing}
\begin{figure}[ht]
\centering
\begin{tikzpicture}[line cap=round,line join=round,>=Latex,x=1cm,y=1cm, scale=0.9]

\draw [fill=black] (-3,6) circle (2pt);
\draw [fill=black] (-3,5) circle (2pt);
\draw [fill=black] (-3,4) circle (2pt);
\draw [fill=black] (-2,6) circle (2pt);
\draw [fill=black] (-2,5) circle (2pt);
\draw [fill=black] (-2,4) circle (2pt) node[yshift=-1em]{Initializiation};
\draw [fill=black] (-1,6) circle (2pt);
\draw [fill=black] (-1,5) circle (2pt);
\draw [fill=black] (-1,4) circle (2pt);
\draw [fill=black] (0,6) circle (2pt);
\draw [fill=black] (0,5) circle (2pt);
\draw [fill=black] (0,4) circle (2pt);
\draw [fill=black] (1,6) circle (2pt);
\draw [fill=black] (1,5) circle (2pt);
\draw [fill=black] (1,4) circle (2pt);
\draw (2,5) node {$\ldots$};
\draw [fill=black] (3,6) circle (2pt);
\draw [fill=black] (3,5) circle (2pt);
\draw [fill=black] (3,4) circle (2pt);
\draw [fill=black] (4,6) circle (2pt);
\draw [fill=black] (4,5) circle (2pt);
\draw [fill=black] (4,4) circle (2pt);
\draw [fill=black] (5,6) circle (2pt) node[yshift=1em]{$X$};
\draw [fill=black] (5,5) circle (2pt);
\draw [fill=black] (5,4) circle (2pt);
\draw [fill=black] (6,6) circle (2pt);
\draw [fill=black] (6,5) circle (2pt);
\draw [fill=black] (6,4) circle (2pt);
\draw [fill=black] (7,6) circle (2pt);
\draw [fill=black] (7,5) circle (2pt);
\draw [fill=black] (7,4) circle (2pt) node[yshift=-1em, xshift=1em] {$Y_{-\Delta}$};
\draw (8,5) node {$\ldots$};
\draw [fill=black] (9,6) circle (2pt);
\draw [fill=black] (9,5) circle (2pt);
\draw [fill=black] (9,4) circle (2pt);
\draw [fill=black] (10,6) circle (2pt);
\draw [fill=black] (10,5) circle (2pt);
\draw [fill=black] (10,4) circle (2pt);
\draw [fill=black] (11,6) circle (2pt);
\draw [fill=black] (11,5) circle (2pt);
\draw [fill=black] (11,4) circle (2pt);
\draw [fill=black] (12,6) circle (2pt);
\draw [fill=black] (12,5) circle (2pt);
\draw [fill=black] (12,4) circle (2pt);
\draw [fill=black] (13,6) circle (2pt);
\draw [fill=black] (13,5) circle (2pt);
\draw [fill=black] (13,4) circle (2pt) node[yshift=-1em]{$Y$};
\draw (14,5) node {$\ldots$};

\draw [->, line width=1pt] (-3,6)-- (-2,5);
\draw [->, line width=1pt] (-3,5)-- (-2,4);
\draw [->, line width=1pt] (-3,4)-- (-2,4);
\draw [<->, line width=1pt, dash pattern=on 3pt off 3pt] (-3,4) -- (-2,6);

\draw [->, line width=1pt] (-2,6)-- (-1,5);
\draw [->, line width=1pt] (-2,5)-- (-1,4);
\draw [->, line width=1pt] (-2,4)-- (-1,4);
\draw [<->, line width=1pt, dash pattern=on 3pt off 3pt] (-2,4) -- (-1,6);

\draw [->, line width=1pt] (-1,6)-- (0,5);
\draw [->, line width=1pt] (-1,5)-- (0,4);
\draw [->, line width=1pt] (-1,4)-- (0,4);
\draw [<->, line width=1pt, dash pattern=on 3pt off 3pt] (-1,4) -- (0,6);

\draw [->, line width=1pt] (0,6)-- (1,5);
\draw [->, line width=1pt] (0,5)-- (1,4);
\draw [->, line width=1pt] (0,4)-- (1,4);
\draw [<->, line width=1pt, dash pattern=on 3pt off 3pt] (0,4) -- (1,6);

\draw [->, line width=1pt] (3,6)-- (4,5);
\draw [->, line width=1pt] (3,5)-- (4,4);
\draw [->, line width=1pt] (3,4)-- (4,4);
\draw [<->, line width=1pt, dash pattern=on 3pt off 3pt] (3,4) -- (4,6);

\draw [->, line width=1pt] (4,6)-- (5,5);
\draw [->, line width=1pt] (4,5)-- (5,4);
\draw [->, line width=1pt] (4,4)-- (5,4);
\draw [<->, line width=1pt, dash pattern=on 3pt off 3pt] (4,4) -- (5,6);

\draw [->, line width=1pt] (5,6)-- (6,5);
\draw [->, line width=1pt] (5,5)-- (6,4);
\draw [->, line width=1pt] (5,4)-- (6,4);
\draw [<->, line width=1pt, dash pattern=on 3pt off 3pt] (5,4) -- (6,6);

\draw [->, line width=1pt] (6,6)-- (7,5);
\draw [->, line width=1pt] (6,5)-- (7,4);
\draw [->, line width=1pt] (6,4)-- (7,4);
\draw [<->, line width=1pt, dash pattern=on 3pt off 3pt] (6,4) -- (7,6);

\draw [->, line width=1pt] (9,6)-- (10,5);
\draw [->, line width=1pt] (9,5)-- (10,4);
\draw [->, line width=1pt] (9,4)-- (10,4);
\draw [<->, line width=1pt, dash pattern=on 3pt off 3pt] (9,4) -- (10,6);

\draw [->, line width=1pt] (10,6)-- (11,5);
\draw [->, line width=1pt] (10,5)-- (11,4);
\draw [->, line width=1pt] (10,4)-- (11,4);
\draw [<->, line width=1pt, dash pattern=on 3pt off 3pt] (10,4) -- (11,6);

\draw [->, line width=1pt] (11,6)-- (12,5);
\draw [->, line width=1pt] (11,5)-- (12,4);
\draw [->, line width=1pt] (11,4)-- (12,4);
\draw [<->, line width=1pt, dash pattern=on 3pt off 3pt] (11,4) -- (12,6);

\draw [->, line width=1pt] (12,6)-- (13,5);
\draw [->, line width=1pt] (12,5)-- (13,4);
\draw [->, line width=1pt] (12,4)-- (13,4);
\draw [<->, line width=1pt, dash pattern=on 3pt off 3pt] (12,4) -- (13,6);

\draw [decorate,decoration={brace,amplitude=5pt,mirror,raise=2ex}]
  (3,4) -- (5,4) node[midway,yshift=-2em]{$C$ layers};
  \draw [decorate,decoration={brace,amplitude=5pt,mirror,raise=2ex}]
  (5,4) -- (7,4) node[midway,yshift=-2em]{$C$ layers};

\end{tikzpicture}

\caption{A priori, deciding if the causal effect of $X$ on $Y$ is identifiable requires running the causal ID algorithm on the entire part of the time series graph shown above - from its initialization up to the layer containing $Y$. Our results show that instead computation on a constant-size section around $X$ (i.e. just the middle part) suffices, using a shifted version $Y_{-\Delta}$ of $Y$.} \label{fig:main} 
\end{figure}
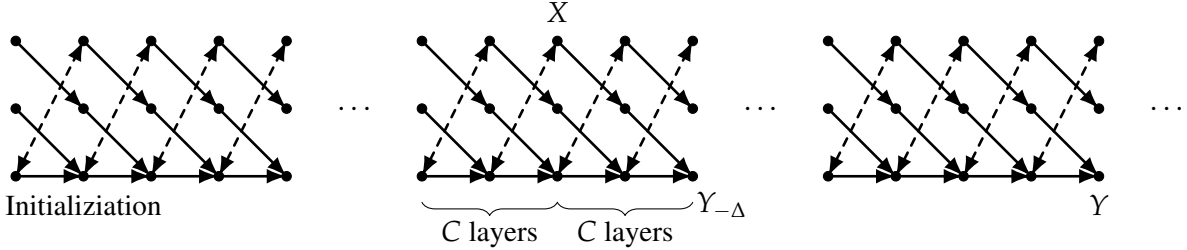

\section{Preliminaries} \label{sec:preliminaries}
First, we define notions related to the causal time series graphs that we consider and fix notation that we use throughout this paper. To clearly distinguish sets of variables from singletons, we use bold letters to denote sets. Graphs are denoted by calligraphic letters. All graphs in this paper are acyclic directed mixed graphs (ADMGs), i.e. graphs with both directed edges representing direct causal effects and bidirected edges representing confounding variables.

\begin{definition}[periodic causal graph] A periodic causal graph of width $w$ is an acyclic directed mixed graph with vertices labeled by  $X_{i,t}$ for $0 \leq i \leq w-1$ and $t \in \nset$. We say, the vertex $X_{i,t}$ is in row $i$ and column, or time, $t$ and denote the set of vertices at time $t$ by $\bX_{*,t}$.

The graph is periodic in that $(X_{i,t},X_{i',t'})$ is a directed (resp.\ bidirected) edge if and only if the same is true of $(X_{i,t+1},X_{i',t'+1})$ and directed edges do not go backwards in time, i.e. each directed edge $(X_{i,t},X_{i',t'})$ must have $t\leq t'$ (allowing for contemporaneous edges). 

 \end{definition}

 \begin{definition}[segments]
     We denote the segment of a periodic causal graph $\cG$ that is induced by the vertices $X_{i,t}$ with $0 \leq i \leq w-1$ and $T \leq t \leq T'$, by $\cG[T, T']$.
 \end{definition}

 \begin{definition}[latency]
     A periodic causal graph is of latency $L$ if for any directed or bidirected edge $(X_{i,t},X_{i',t'})$, we have $|t-t'|\leq L$. 
 \end{definition}

 \begin{definition}[distance between sets]
    Let $\cG$ be a periodic causal graph with vertex set $\bV(\cG)$. For a vertex $V \in \bV(\cG)$, define $\layer(V) = t$, if $V = X_{i,t}$. For a set $\bX \subseteq \bV(\cG)$, we define its minimum layer and maximum layer by
    \begin{align*}
        \tmin(\bX) &= \min \{\layer(V) | V \in \bX\}\\
        \tmax(\bX) &= \max \{\layer(V) | V \in \bX\}.
    \end{align*}
    Finally, the distance between two sets $\bX, \bY \subseteq \bV(\cG)$ is given by 
    \begin{align*}
        \dist(\bX, \bY) = \min \{|\layer(V_1) - \layer(V_2)| \mid V_1 \in \bX, V_2 \in \bY\}.
    \end{align*}
\end{definition}

\begin{definition}[time shifts]
    Let $\cG$ be a periodic causal graph and $\bX \subseteq \bV(\cG)$ with $\tmin(\bX) \geq \Delta$ for some $\Delta \in \nset$. Then, we define the shifted sets 
    \begin{align*}
        \bX_{+\Delta} &= \{X_{i,t+\Delta} | X_{i,t} \in \bX\},\\
        \bX_{-\Delta} &= \{X_{i,t-\Delta} | X_{i,t} \in \bX\}.
    \end{align*}
\end{definition}

\begin{figure*}
 \[ \xymatrix{
X_{0,0} \ar@{->}[d] \ar@{->}[r]&X_{0,1} \ar@{->}[d] \ar@{->}[r] &X_{0,2} \ar@{->}[d] \ar@{->}[r] & X_{0,3} \ar@{->}[d] \ar@{->}[r] &X_{0,4} \ar@{->}[d] \ar@{->}[r] & \ldots \\
X_{1,0} \ar@{->}[ru] \ar@{<-->}[rd]&  X_{1,1} \ar@{->}[ru] \ar@{<-->}[rd]&  X_{1,2} \ar@{->}[ru] \ar@{<-->}[rd]& X_{1,3} \ar@{->}[ru] \ar@{<-->}[rd]&  X_{1,4} \ar@{->}[ru] \ar@{<-->}[rd]& \ldots \\
X_{2,0} \ar@{->}[ru] \ar@{->}[r]& X_{2,1} \ar@{->}[ru] \ar@{->}[r]& X_{2,2} \ar@{->}[ru] \ar@{->}[r]& X_{2,3} \ar@{->}[ru] \ar@{->}[r]& X_{2,4} \ar@{->}[ru] \ar@{->}[r] & \ldots } \] 
\caption{Periodic graph of width $3$ and latency $1$. Directed edges are solid, bidirected dashed.} \label{AXexample}
\end{figure*}
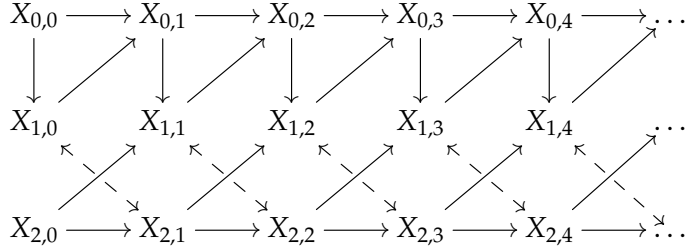

See Figure~\ref{AXexample} for an example of a periodic causal graph. A periodic causal graph $\cG$ represents a causal model, which is given by a set of equations
\begin{align*}
    X_{i,t} = f_{i,t}(\Pa(X_{i,t}, \cG), \eps_{i,t}).
\end{align*}
Here, $\Pa(X_{i,t}, \cG)$ denotes the parents of $X_{i,t}$ in $\cG$ and the $\eps_{i,t}$ are random noise variables that may be correlated for variables $X_{i,t}$ that are connected by a bi-directed edge, and are independent otherwise. Note that we do not require the functions $f_{i,t}$ to be the same for different time steps. For two sets of variables $\bX, \bY \subseteq \bV(\cG)$, $P(\bY | \Do \bX = \mathbf{x})$ denotes the probability distribution of the variables $\bY$, when the causal model is generated using the above equations, but with the variables in $\bX$ set to values given by $\mathbf{x}$. This distribution quantifies the \emph{causal effect} of setting $\bX = \mathbf{x}$ on $\bY$. In what follows, we usually do not specify the assignment of values $\mathbf{x}$. We say that $P(\bY | \Do \bX)$ is \emph{identifiable} if one can compute it, knowing only the causal graph $\cG$ and the total probability distribution of the variables $\bV(\cG)$ but not the specific functions $f_{i,t}$ or noise terms $\eps_{i,t}$ giving rise to the causal model. For a more detailed introduction into graphical causal models and do-calculus, see \citet{Pea09}, and see \citet{Blondel_Arias_Gavaldà_2016} for time series graphs specifically.

We will make use of the well-known characterization of identifiability for causal effects in finite causal graphs, developed by \citet{tian_general_2002, shpitserP06,huangV09}. The following definitions and results are according to \citet{shpitser08}: 

\begin{definition}[C-component] 
An ADMG $\cG$ where any pair of vertices is connected by a path of bidirected edges is called a C-component (confounded component).
\end{definition}

\begin{definition}[forest]
    An ADMG $\cG$ in which each vertex has at most one child is called a forest. The set of vertices $\bR \subseteq \bV(\cG)$ that do not have any children is called the set of roots of $\cG$.
\end{definition}

\begin{definition}[C-forest]
    An ADMG $\cG$ that is both a C-component and a forest is called a C-forest. 
\end{definition}

Now, for a vertex set $\bX$ in an ADMG $\cG$, let $\cG_\obX$ be the graph that is obtained from $\cG$ by deleting all incoming directed edges to $\bX$. We say that a vertex $V$ is an ancestor of vertex $W$ if there is a directed path from $V$ to $W$ (using only directed edges). The set of ancestors of $W$ with respect to the graph $\cG$ is denoted $\An(W, \cG)$.

\begin{definition}[hedge] \label{def:hedge}
    Let $\cG$ be an ADMG on a set of variables $\bV(\cG)$. For two disjoint vertex sets $\bX, \bY \subseteq \bV(\cG)$, a hedge for $\bX, \bY$ is given by two subgraphs $\cF, \cF' \subseteq \cG$ such that 
    \begin{enumerate}[label={(\arabic*)}]
        \item $\cF'$ is a subgraph of $\cF$;
        \item $\bX$ only occurs in $\cF$, i.e. $\bX \subseteq \bV(\cF \setminus \cF')$;
        \item $\cF$ and $\cF'$ are C-forests;
        \item $\cF$ and $\cF'$ have the same set $\bR$ of roots, and $\bR \subseteq \An(\bY, \cG_\obX)$. 
    \end{enumerate}
\end{definition}

\begin{lemma}[Causal ID algorithm \citep{shpitser08}] \label{lem:ID}
    Given a causal graph $\cG$ and disjoint vertex sets $\bX, \bY \subseteq \bV(\cG)$, then $P(\bY | \Do \bX)$ is unidentifiable if and only if there exists a hedge for $\bX, \bY$ in $\cG$. Moreover, there is a polynomial-time algorithm, the Causal ID algorithm~\ref{alg:ID}, that can decide whether or not a hedge exists for a given pair of sets $\bX, \bY$.
\end{lemma}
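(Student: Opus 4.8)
The plan is to prove the two directions of the hedge criterion separately and then argue the complexity claim, with Tian's factorization of interventional distributions over confounded components as the backbone. For a subset $\bS \subseteq \bV(\cG)$, write $Q[\bS]$ for the interventional distribution obtained by fixing every variable outside $\bS$, i.e.\ $Q[\bS] = P(\bS \mid \Do(\bV(\cG)\setminus \bS))$. The first ingredient I would establish is Tian's lemma: $P(\bV(\cG))$ factorizes as $\prod_i Q[\bS_i]$ over the c-components $\bS_1,\dots,\bS_k$ of $\cG$, and each $Q[\bS_i]$ is computable from the observational distribution as a product of conditionals taken in topological order. This reduces the target query to a $Q$-factor computation: after discarding non-ancestors of $\bY$ and the intervened set, one shows $P(\bY\mid\Do\bX)$ equals a marginal of $Q[\bD]$ for $\bD = \An(\bY,\cG_\obX)$, and $Q[\bD]$ itself factorizes over the c-components of the induced subgraph $\cG[\bD]$.

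The core of the algorithm, and of the proof, is the recursive step that computes a single c-component factor $Q[\bS]$ from a known factor $Q[\bT]$ with $\bS\subseteq\bT$. When $\bS$ is a union of c-components of $\cG[\bT]$ one descends; the recursion halts in failure exactly when $\bS$ is a proper subset of a single c-component $\bT$ whose $\cG[\bT]$-ancestors of $\bS$ exhaust $\bT$. I would prove that this failure structure is precisely a hedge in the sense of Definition~\ref{def:hedge}: from the two nested c-components one extracts C-forests $\cF'\subseteq\cF$ sharing a common root set $\bR\subseteq\An(\bY,\cG_\obX)$, with $\bX$ appearing only in $\cF\setminus\cF'$. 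The reverse inclusion — that the presence of any hedge forces the recursion into such a failure — is the structural heart of the argument and is where I expect the main difficulty: it requires showing that the hedge's root set survives every ancestor and c-component reduction the algorithm performs, so the offending nested pair is never dissolved.

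For soundness (no hedge $\Rightarrow$ identifiable) I would verify that each non-failing step is a valid do-calculus or Tian-factorization identity, so a successful run returns a formula expressed entirely in terms of $P(\bV(\cG))$. For completeness (hedge $\Rightarrow$ unidentifiable) I would exhibit the canonical counterexample: making all variables binary, set each variable in the hedge to the parity (XOR) of its parents together with the bits carried by its bidirected confounders, and build a second model differing by a single sign so that summing over the latent confounder bits leaves $P(\bV(\cG))$ unchanged while the two models assign different distributions to the shared roots $\bR$ under $\Do\bX$ — the parity propagated through $\cF\setminus\cF'$ differs from that through $\cF'$. Verifying that this construction fixes the observational law yet shifts $P(\bY\mid\Do\bX)$ is the other delicate calculation.

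Finally, for the polynomial-time claim I would note that c-components are the connected components of the bidirected subgraph and ancestor sets are computable by graph search, both in linear time; every recursive call either strictly shrinks the working vertex set or descends into a strictly smaller c-component, bounding the recursion by $O(|\bV(\cG)|)$ calls, and the failure test is a local ancestor check. Hence a hedge can be detected — and identifiability of the effect decided — in time polynomial in $|\bV(\cG)|$.
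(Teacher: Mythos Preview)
The paper does not prove Lemma~\ref{lem:ID}; it is quoted as a known result from \citet{shpitser08} (building on \citet{tian_general_2002,shpitserP06,huangV09}) and used as a black box. So there is nothing to compare your proposal against in this paper.

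That said, your sketch is essentially the argument in the cited sources: Tian's $Q$-factorization over c-components, recursive reduction on $Q[\bS]\subseteq Q[\bT]$, soundness of each reduction step via do-calculus, and the parity construction on the hedge for completeness. One remark on the logical structure: the step you flag as ``the structural heart'' --- showing directly that an arbitrary hedge forces the recursion to fail --- is not needed. The standard chain is (i) algorithm succeeds $\Rightarrow$ identifiable, (ii) algorithm fails $\Rightarrow$ it outputs a hedge, (iii) hedge $\Rightarrow$ unidentifiable via the two-model construction. These three close the equivalence without ever tracking a given hedge through the recursion; the implication ``hedge $\Rightarrow$ algorithm fails'' drops out as the contrapositive of (i) combined with (iii). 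Attempting to prove it directly, as you propose, is harder than necessary and is not how the original papers proceed. Your polynomial-time argument is fine.
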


Algorithm~\ref{alg:ID} is a simplified version of the Causal ID algorithm, only giving a Boolean answer in case that the causal effect is identifiable. Here, $\cG[\bA]$ denotes the induced graph on a vertex set $\bA \subseteq \bV(\cG)$, and $C(\cG)$ denotes the set of C-components of $\cG$.

\begin{algorithm} 
    \caption{Causal ID algorithm (adapted from \citet{shpitser08})} \label{alg:ID}
     \begin{lstlisting}[style=pseudocode,mathescape,numbers=left,escapechar=|,columns=fullflexible,breaklines=true]
     Input: ADMG $\cG$ and disjoint sets $\bX, \bY \subseteq \bV(\cG)$.
     Output: True if $P(\bY | \Do \bX)$ is identifiable, hedge $(\cF, \cF')$ otherwise.
     if $\bX = \emptyset$, return True.
     if $\bV(\cG) \setminus \An(\bY) \neq \emptyset$, return $\ID(\cG[\An(\bY)], \bX \cap \An(\bY), \bY)$.
     let $\bW = \bV(\cG) \setminus (\bX \cup \An(\bY, \cG_{\obX}))$; if $\bW \neq \emptyset$, return $\ID(\cG, \bX \cup \bW, \bY)$.
     if $C(\cG \setminus \bX) = \{\cS_1, \dots, \cS_k\}$, return $\bigwedge_i \ID(\bV(\cG) \setminus \bV(\cS_i), \bV(\cS_i), \cG)$.
     if $C(\cG \setminus \bX) = \{\cS\}$:
     |\hspace{0.8cm}| if $C(\cG) = \{\cG)\}$, return the hedge $(\cG, \cG \cap \cS)$.
     |\hspace{0.8cm}| if $\exists \cS \subseteq \cS'$, such that $\cS' \in C(\cG)$, return $\ID(\cG[\bV(\cS')], \bX \cap \cS', \bY)$.
     |\hspace{0.8cm}| if $\cS \in C(\cG)$, return True.
    
     \end{lstlisting}  
 \end{algorithm}

\section{Results} \label{sec:results}

A priori, the Causal ID algorithm only works for causal graphs of finite size. However, note that in a periodic causal graph $\cG$ corresponding to a time series, directed edges are only going forward in time. First of all, this implies that the causal effect from $X_{i,t}$ on $X_{i', t'}$ is trivially identifiable for $t > t'$ (as we have $P(X_{i', t'} | \Do X_{i,t}) = P(X_{i', t'})$). Moreover, it is possible to compute identifiability of $P(X_{i', t'} | \Do X_{i,t})$ for $t \leq t'$, by running the Causal ID algorithm on the segment of the graph $\cG$ that includes all vertices up to time $t'$. To see this, note that any hedge for $X_{i,t}, X_{i', t'}$ can only consist of ancestors of $X_{i', t'}$. However, it is possible that there exists a hedge for $X_{i,t}, X_{i', t'}$ intersecting time layers strictly smaller than $t$, see Figure~\ref{fig:2}. This implies that even just identifying the causal effect between two single variables in the same layer $t$ a priori requires running the Causal ID algorithm on $\cG[0,t]$, which takes computing time that grows polynomially in $wt$. Moreover, using the Causal ID algorithm, one cannot answer the following question: Is the causal effect $P(X_{i', t'} | \Do X_{i,t})$ identifiable for all $t' > t$? 

Our results resolve both these problems. The following two statements are refinements of Theorem~\ref{thm:main} that we stated in the introduction. 

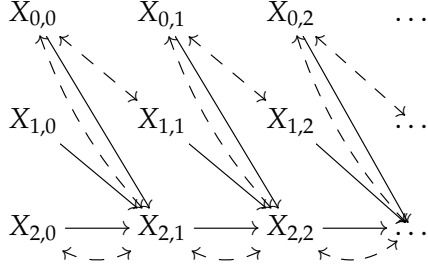
\begin{figure*}
 \[ \xymatrix{
X_{0,0} \ar@{<-->}[dr] \ar@{->}[ddr] \ar@/^-0.5pc/@{<-->}[ddr] &X_{0,1} \ar@{<-->}[dr] \ar@{->}[ddr] \ar@/^-0.5pc/@{<-->}[ddr] &X_{0,2} \ar@{<-->}[dr] \ar@{->}[ddr] \ar@/^-0.5pc/@{<-->}[ddr] & \ldots \\
X_{1,0} \ar@{->}[dr] &  X_{1,1} \ar@{->}[dr] &  X_{1,2} \ar@{->}[dr] & \ldots \\
X_{2,0} \ar@/^-1.0pc/@{<-->}[r] \ar@{->}[r]& X_{2,1} \ar@/^-1.0pc/@{<-->}[r] \ar@{->}[r] & X_{2,2} \ar@/^-1.0pc/@{<-->}[r] \ar@{->}[r]&  \ldots } \] 
\caption{The effect $P(X_{2,2} | \Do X_{1,1})$ is unidentifiable because of the unique hedge $(\cF, \cF')$ with $\bV(\cF) = \{X_{0,0}, X_{1,1}, X_{2,1}, X_{2,2}\}$ and $\bV(\cF') = \{X_{2,2}\}.$ Note that this unidentifiability cannot be detected when only looking at the layers from $X_{1,1}$ onward.}  \label{fig:2}
\end{figure*} 

\begin{proposition} \label{prop:past}
    Let $\cG$ be a periodic causal graph with width $w$ and latency $L$ and let $C = L \cdot 2^{Lw} \cdot (Lw + 1)^{2Lw+2}$. For subsets $\bX, \bY \subseteq \bV(\cG)$, we have that $P(\bY | \Do \bX)$ is unidentifiable if and only if there exists a hedge for $\bX, \bY$ in $\cG[\max(\tmin(\bX) - C, 0), \tmax(\bY)]$.
\end{proposition}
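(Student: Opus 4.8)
The plan is to route everything through Lemma~\ref{lem:ID}, which says $P(\bY\mid\Do\bX)$ is unidentifiable in a graph precisely when that graph contains a hedge for $\bX,\bY$. The ``if'' direction is then immediate: reading Definition~\ref{def:hedge} literally, a hedge for $\bX,\bY$ living in the segment $\cG[\max(\tmin(\bX)-C,0),\tmax(\bY)]$ is also a hedge in $\cG$, since being a C-forest is inherited by supergraphs and ancestry inside the segment implies ancestry in $\cG$. So all the work is in the ``only if'' direction, where we are handed \emph{some} hedge $(\cF,\cF')$ for $\bX,\bY$ in $\cG$ and must shrink its time span.

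First I would note that $\cF$ automatically lies in layers $\le\tmax(\bY)$: every vertex of the forest $\cF$ has a directed path within $\cF$ to a root of $\bR\subseteq\An(\bY,\cG_{\obX})$, directed edges never decrease the time index, and $\tmax(\bR)\le\tmax(\bY)$. Hence the only quantity left to control is how far into the past $\cF$ reaches, and the goal reduces to showing that some hedge for $\bX,\bY$ satisfies $\tmin(\bV(\cF))\ge\tmin(\bX)-C$.

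For this I would run a pumping/contraction argument. Among all hedges for $\bX,\bY$ in $\cG$, pick one minimizing the nonnegative integer $d=\tmin(\bX)-\tmin(\bV(\cF))$; if $d\le 0$ we are done, so suppose toward a contradiction that $d>C$. Consider the ``tail'', the restriction of $(\cF,\cF')$ to the layers strictly below $\tmin(\bX)$: it contains no vertex of $\bX$ and, since it spans more than $C$ layers, it meets many disjoint blocks of $L$ consecutive layers. Because $\cG$ is periodic and every edge has latency $\le L$, the complete interaction of $\cF$ and $\cF'$ with a single such block $B$ is captured by a bounded \emph{signature}, recorded up to time shift: which of the $\le Lw$ vertices of $B$ lie in $\cF$ (at most $2^{Lw}$ options), together with, for each of those, a pair of labels valued in a set of size $Lw+1$ — one encoding its bidirected-connectivity class inside $\cF$ ``as seen from the left'', the other doing the same inside $\cF'$ and recording whether the vertex lies in $\cF'$ at all — plus a constant number of extra labels of global data tracking where the root-paths exit the block. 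The number of signatures is thus at most $2^{Lw}(Lw+1)^{2Lw+2}=C/L$, so a pigeonhole argument (this is exactly where the factor $L$ in $C$ is spent) yields two disjoint blocks $B,B'$ of the tail with the same signature. Delete the layers strictly between $B$ and $B'$ and splice the left remnant of $(\cF,\cF')$ onto the right remnant along the common signature, using periodicity of $\cG$ to ensure the spliced object is again a subgraph of $\cG$. By design of the signature, the result is again a forest with the same root set $\bR$, still a single C-component, still has $\bX\subseteq\bV(\cF\setminus\cF')$, and still has $\bR\subseteq\An(\bY,\cG_{\obX})$; it is therefore a hedge whose support starts strictly later than $\tmin(\bV(\cF))$, contradicting minimality of $d$. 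Hence every minimal hedge satisfies $\tmin(\bV(\cF))\ge\tmin(\bX)-C$, and combined with the layer bound above it sits inside $\cG[\max(\tmin(\bX)-C,0),\tmax(\bY)]$.

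The delicate step, and the main obstacle, is choosing the signature so that the splice provably preserves all four clauses of Definition~\ref{def:hedge}. Clauses (1) and (2) are robust: deleting vertices keeps a forest a forest and cannot push $\bX$ into $\cF'$. Clause (4) follows once the root set and its ancestor relation to $\bY$ are maintained, which is why the global part of the signature must track the root-paths leaving the block. The genuine risk is clause (3): excising a slab of layers can disconnect the bidirected graph of $\cF$, or of $\cF'$, so the signature has to record how bidirected connectivity is ``routed'' across each block in both $\cF$ and $\cF'$ — and it is this per-vertex bookkeeping (a pair of labels in a set of size $Lw+1$), together with the subset choice ($2^{Lw}$) and the block length $L$, that produces exactly the stated value of $C$. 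Finally I would dispatch the boundary cases — $\tmin(\bX)-C<0$, or some vertices of $\bY$ (and hence some roots of $\cF$) lying below $\tmin(\bX)$ — which only make the relevant tail shorter and so cause no new difficulty.
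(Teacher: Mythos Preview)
Your proposal is correct and follows essentially the same route as the paper: reduce to latency~$1$ by grouping $L$ consecutive layers, attach to each layer a finite signature (the paper formalizes this as the triple $(\alpha_{\cF},\alpha_{\cF'},\beta)$ recording the left-connected partitions of $\cF$ and $\cF'$ together with the set $\bX_{*,t}\cap\An(\bY,\cG_{\obX})$), pigeonhole to find two matching layers in the tail, excise the slab between them via a shift map, and contradict the extremality of the chosen hedge. The paper isolates the splice-preserves-hedge step you flag as delicate into a standalone lemma (Lemma~\ref{lem:technical}) and makes the shift-back to a hedge for the original $\bX,\bY$ explicit, but the argument is the one you outline.
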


\begin{proposition} \label{prop:future}
    Let $\cG$ be a periodic causal graph with width $w$ and latency $L$ and let $C = L \cdot 2^{Lw} \cdot (Lw + 1)^{2Lw+2}$. Consider subsets $\bX, \bY \subseteq \bV(\cG)$ such that $\tmax(\bX) \leq \tmin(\bY)$. If $P(\bY | \Do \bX)$ is unidentifiable, then there exists a time shift $\Delta > \dist(\bX, \bY) - C$, such that $P(\bY_{-\Delta} | \Do \bX)$ is also unidentifiable.
\end{proposition}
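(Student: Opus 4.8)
The plan is to convert unidentifiability into a hedge via Lemma~\ref{lem:ID} and then ``pump down'' the hedge by excising a periodic stretch of $\cG$ lying strictly between $\bX$ and $\bY$. If $\dist(\bX,\bY)\le C$ we are already done with $\Delta=0$, so assume $\dist(\bX,\bY)>C$; then the interval $G:=[\tmax(\bX),\tmin(\bY)]$ comprises more than $C$ layers and its interior meets neither $\bX$ nor $\bY$. By Lemma~\ref{lem:ID} fix a hedge $(\cF,\cF')$ for $\bX,\bY$, and for each root $r\in\bR$ fix a directed path $P_r$ from $r$ to $\bY$ inside $\cG_{\obX}$ (available since $\bR\subseteq\An(\bY,\cG_{\obX})$). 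Let $\cH$ be the ``decorated'' subgraph consisting of $\cF$, the sub-C-forest $\cF'$ and the paths $P_r$, with each vertex remembering which of these it belongs to.

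The next step is a finite-state abstraction of $\cH$ along the time axis. To each layer $t\in G$ I would attach a \emph{state} recording the decorated restriction of $\cH$ to a bounded window of layers around $t$: which vertices lie in $\bV(\cF)$ and in $\bV(\cF')$, the $\cF$-child and $\cF'$-child of each (a pointer to a nearby vertex, or ``none''), the incident ancestral-path structure, and the pattern of $\cH$-edges crossing the window boundary --- enough to reconstruct $\cH$ locally and, crucially, to glue two such windows together consistently. A counting argument bounds the number of states by $2^{Lw}(Lw+1)^{2Lw+2}=C/L$, so since $G$ has more than $C$ layers, two layers $a<b$ in $G$ carry the same state; since $a,b\in G$, all of $\bX$ and every edge into $\bX$ lie at or below layer $a$ (they involve only layers $\le\tmax(\bX)\le a$), while all of $\bY$ lies at or above layer $b$.

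Now excise the layers of $\cG$ strictly between $a$ and $b$ and relabel every later layer by subtracting $\Delta:=b-a$. Because $\cG$ is periodic the relabelled graph is again $\cG$, and because the states at $a$ and $b$ agree the two interfaces glue consistently (the edges that used to cross the excised region are precisely the boundary-crossing edges recorded in the common state), yielding a decorated subgraph $\cH'$. The claim is that $\cH'$ is a hedge for $\bX,\bY_{-\Delta}$: the part of $\cH$ at layers $\le a$ is untouched and still carries $\bX$; the part above $b$ is transported by the shift, and since $\cG$ and $\cG_{\obX}$ coincide on layers above $\tmax(\bX)$, re-establishing the $P_r$ as directed paths in $\cG$ automatically re-establishes them in $\cG_{\obX}$, so the new root set remains contained in $\An(\bY_{-\Delta},\cG_{\obX})$; and the forest property, the inclusion $\cF'\subseteq\cF$ and equality of the root sets of $\cF$ and $\cF'$ all survive because they are local to windows. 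Iterating while the remaining gap still exceeds $C$ drives $\dist(\bX,\bY_{-\Delta})$ below $C$, so the accumulated shift satisfies $\Delta>\dist(\bX,\bY)-C$; one last appeal to Lemma~\ref{lem:ID}, now applied to $\cH'$, shows $P(\bY_{-\Delta}\mid\Do\bX)$ is unidentifiable.

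The main obstacle is making the splice genuinely preserve a hedge, and the delicate ingredient is the C-component requirement, which --- unlike membership, the forest property, or the ancestral condition --- is global: a bidirected path of $\cF$ (or of $\cF'$) threading through the excised layers is not automatically repaired by a naive glue. I expect to handle this either by enlarging the state to record, in the style of a pathwidth characteristic, the partition of the window's boundary vertices into the bidirected-connected classes they currently belong to --- so that identical states force the two sides to relink into a single C-component --- or, more cleanly, by first proving a normal-form lemma: a hedge may be chosen so that only boundedly many of its edges cross any layer boundary and its bidirected structure within $G$ is a simple path, which makes both the reconnection and the state count transparent. This finite-abstraction / normal-form lemma ought to be exactly the device that also bounds how far into the past one must look in Proposition~\ref{prop:past}; the two statements differ only in which stretch of the hedge --- the part before $\bX$, or the part between $\bX$ and $\bY$ --- gets collapsed.
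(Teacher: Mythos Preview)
Your plan is correct and is essentially the paper's own argument: pigeonhole on the layers between $\bX$ and $\bY$, then excise a segment via the shift map and show the spliced object is again a hedge, with the C-connectedness repaired by recording a partition-style invariant at each layer. The paper implements exactly your option~(1): its state at layer $t$ is the triple $(\alpha_\cF(t),\alpha_{\cF'}(t),\beta(t))$, where $\alpha_\cF(t)$ records the partition of row indices into \emph{left-connected} bidirected components of $\cF$ (your ``pathwidth characteristic''), and $\beta(t)=\bX_{*,t}\cap\An(\bY,\cG_{\obX})$ replaces your decorated paths $P_r$ by the much leaner datum of which vertices at layer $t$ are $\bY$-ancestors --- this is what makes the bound $2^{Lw}(Lw+1)^{2Lw+2}$ fall out cleanly, after first aggregating $L$ consecutive layers to reduce to latency~$1$.
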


Note that part~\ref{item:a} of Theorem~\ref{thm:main} follows from Proposition~\ref{prop:past} and part~\ref{item:b} of Theorem~\ref{thm:main} follows from repeatedly applying Proposition~\ref{prop:future}. In particular, these statements imply that whenever there are unidentifiable effects $P(\bY | \Do \bX)$ in the causal graph $\cG$, then they can be found by examining only a constant number of layers of $\cG$, independent of the time distance between $\bX$ and $\bY$ or the size of the past of the time series. Indeed, consider the following algorithm:

\begin{algorithm} 
    \caption{Decides identifiability of $P(\bY_{+\Delta} | \Do \bX)$ for all $\Delta \geq 0$.} \label{alg:main}
     \begin{lstlisting}[style=pseudocode,mathescape,numbers=left,escapechar=|,columns=fullflexible,breaklines=true]
     Input: Periodic causal graph $\cG$ of width $w$ and latency $L$, and disjoint sets $\bX, \bY \subseteq \bV(\cG)$ such that $\tmax(\bX) = \tmin(\bY)$.
     Output: True if $P(\bY_{+\Delta} | \Do \bX)$ is identifiable for all $\Delta \geq 0$, hedge $(\cF, \cF')$ for some $\bY_{+\Delta}, \bX$ otherwise.
     Let $C = L \cdot 2^{Lw} \cdot (Lw+1)^{2Lw+2}$.
     For $\Delta = 0, \dots, C-1$:
     |\hspace{0.8cm}| Let $B = \ID(\cG[\tmin(\bX) - C, \tmax(\bY_{+\Delta})], \bX, \bY_{+\Delta})$, if $B \neq $ True, return B.
     return True.
    
     \end{lstlisting}  
 \end{algorithm}

\begin{corollary}\label{cor:correctness}
Let $\cG$ be a periodic causal graph and $\bX, \bY \subseteq \bV(\cG)$ such that $\tmax(\bX) = \tmin(\bY)$. Then, Algorithm~\ref{alg:main} correctly computes whether $P(\bY_{+\Delta} | \Do \bX)$ is identifiable for all $\Delta \in \nset$.
\end{corollary}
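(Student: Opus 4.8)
The plan is to deduce the corollary from Propositions~\ref{prop:past} and~\ref{prop:future} by verifying that Algorithm~\ref{alg:main} is both sound and complete. First I would record two elementary consequences of the hypothesis $\tmax(\bX)=\tmin(\bY)$: for every $\Delta\ge 0$ the sets $\bX$ and $\bY_{+\Delta}$ are disjoint (for $\Delta\ge 1$ all of $\bY_{+\Delta}$ lies strictly above the layers occupied by $\bX$, and for $\Delta=0$ this is assumed), $\tmax(\bX)\le\tmin(\bY_{+\Delta})$, and $\dist(\bX,\bY_{+\Delta})=\Delta$. Hence both propositions apply to every pair $(\bX,\bY_{+\Delta})$ occurring in the algorithm, and the $\ID$ call in the loop, run on $\cG[\tmin(\bX)-C,\tmax(\bY_{+\Delta})]$, is — since layers are nonnegative — exactly run on $\cG[\max(\tmin(\bX)-C,0),\tmax(\bY_{+\Delta})]$, the segment named in Proposition~\ref{prop:past}.

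For soundness, suppose the algorithm returns some $B\neq\mathrm{True}$. This happens only if at some iteration $\Delta\in\{0,\dots,C-1\}$ the $\ID$ call returns a hedge; by Lemma~\ref{lem:ID} this means a hedge for $\bX,\bY_{+\Delta}$ exists in that segment, and by Proposition~\ref{prop:past} this is equivalent to $P(\bY_{+\Delta}\mid\Do\bX)$ being unidentifiable in $\cG$. So there genuinely is a $\Delta$ with unidentifiable effect, the returned object is a bona fide hedge in $\cG$, and declaring ``not identifiable for all $\Delta$'' is correct.

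For completeness, suppose the algorithm returns $\mathrm{True}$. By Proposition~\ref{prop:past} applied at each $\Delta<C$ together with Lemma~\ref{lem:ID}, this forces $P(\bY_{+\Delta}\mid\Do\bX)$ to be identifiable for all $\Delta\in\{0,\dots,C-1\}$, and I want to upgrade this to all $\Delta\ge 0$. Assume not, and let $\Delta_0$ be the least $\Delta$ with $P(\bY_{+\Delta}\mid\Do\bX)$ unidentifiable; then $\Delta_0\ge C$. Applying Proposition~\ref{prop:future} to $(\bX,\bY_{+\Delta_0})$, and using $\dist(\bX,\bY_{+\Delta_0})=\Delta_0$, produces a shift $\Delta'>\Delta_0-C\ge 0$, hence $\Delta'\ge 1$, such that $P\big(\bY_{+(\Delta_0-\Delta')}\mid\Do\bX\big)$ is unidentifiable, with $\Delta_0-\Delta'<\Delta_0-(\Delta_0-C)=C$. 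Provided $\Delta_0-\Delta'\ge 0$, this contradicts the identifiability of all effects with shift in $\{0,\dots,C-1\}$.

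The step needing the most care is exactly the inequality $\Delta_0-\Delta'\ge 0$, i.e.\ that the backward shift furnished by Proposition~\ref{prop:future} does not carry $\bY_{+\Delta_0}$ past $\bX$ ($\Delta'\le\Delta_0$). I would extract this from the construction underlying Proposition~\ref{prop:future}: its witnessing hedge is a bounded backward translate of the original hedge, so the shift cannot exceed $\dist(\bX,\bY_{+\Delta_0})=\Delta_0$, leaving $\bY_{+(\Delta_0-\Delta')}$ a valid, $\bX$-disjoint vertex set at nonnegative layers. If a single application does not directly yield this, the same estimate lets me iterate Proposition~\ref{prop:future} as in the proof of Theorem~\ref{thm:main}\ref{item:b}: each step keeps the effect unidentifiable while strictly decreasing the nonnegative distance to $\bX$, so after finitely many steps that distance — equivalently the shift relative to $\bY$ — lands in $\{0,\dots,C-1\}$, the range the loop checks. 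The remaining points (clamping the left endpoint at layer $0$ near initialization, and termination: the loop runs $C$ times, each time with a polynomial-time $\ID$ call) are routine and already built into the statements of Proposition~\ref{prop:past} and Lemma~\ref{lem:ID}.
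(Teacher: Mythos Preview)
Your approach is essentially the same as the paper's, which dispatches the corollary in a single sentence (``follows directly from part~\ref{item:b} of Theorem~\ref{thm:main}''); you simply unfold that sentence into an explicit soundness/completeness argument via Propositions~\ref{prop:past} and~\ref{prop:future}, which is exactly how the paper says Theorem~\ref{thm:main} itself is obtained. In fact your write-up is more careful than the paper's: you correctly note that Proposition~\ref{prop:past} (equivalently Theorem~\ref{thm:main}\ref{item:a}) is also needed so that the $\ID$ call on the finite segment is faithful to identifiability in $\cG$, and you flag the one genuine technicality---that the backward shift produced by Proposition~\ref{prop:future} satisfies $\Delta'\le\Delta_0$---which the paper leaves implicit but which indeed follows from its proof sketch (the cut is taken strictly between $\tmax(\bX)$ and $\tmin(\bY_{+\Delta_0})$).
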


Corollary~\ref{cor:correctness} follows directly from part~\ref{item:b} of Theorem~\ref{thm:main}. The value we are using for $C$ in Algorithm~\ref{alg:main} grows exponentially in the width $w$ and the latency $L$ of the causal graph. It would be of great interest to improve this dependence. The following statement shows that for some graphs, $C$ has to grow at least linearly in $w$. 

\begin{theorem} \label{thm:lowerbound}
    For infinitely many $w$, there exist periodic causal graphs $\cG_w$ with width $w$ and latency $1$, such that $P(Y | \Do X)$ is unidentifiable for some singletons $X, Y \in \bV(\cG)$ but the causal effects $P(Z | \Do X)$ are identifiable for all $Z \in \bV(\cG)$ satisfying $\dist(X, Z) \leq \frac{w}{3} - 1$.
\end{theorem}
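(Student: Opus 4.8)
The plan is to exhibit, for each $w$, an explicit periodic causal graph $\cG_w$ of width $w$ and latency $1$, and to take $X=X_{0,0}$ and $Y=X_{w-1,w-1}$. Call row $w-1$ the \emph{hub}. For every time $t$, the graph $\cG_w$ contains: directed \emph{cascade} edges $X_{i,t}\to X_{i+1,t+1}$ for $0\le i\le w-2$; a directed \emph{hub chain} $X_{w-1,t}\to X_{w-1,t+1}$; a bidirected \emph{hub chain} $X_{w-1,t}\leftrightarrow X_{w-1,t+1}$; and, for each $0\le i\le w-2$, a contemporaneous bidirected \emph{spoke} $X_{i,t}\leftrightarrow X_{w-1,t}$. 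This is periodic, it is acyclic since every directed edge advances the clock, and no edge spans more than one time step, so the latency is $1$. Two rigidity facts, verified by inspecting the edge list, will carry the argument: (i) for $0\le k\le w-2$ the only directed out-edge at $X_{k,k}$ leads to $X_{k+1,k+1}$ and the only bidirected edge at $X_{k,k}$ is the spoke to $X_{w-1,k}$; (ii) every non-hub vertex has exactly one bidirected neighbour --- it is a leaf of the bidirected graph --- so the hub vertices lying in any one C-component form a contiguous block, since a bidirected path joining two hub vertices cannot pass through a spoke leaf.

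First I would certify that $P(X_{w-1,w-1}\mid\Do X_{0,0})$ is unidentifiable by displaying a hedge. Let $\cF$ be the cascade $X_{0,0},X_{1,1},\dots,X_{w-1,w-1}$ together with the hub block $X_{w-1,0},\dots,X_{w-1,w-1}$, and let $\cF'$ be the hub block alone. Both are C-forests with the single common root $X_{w-1,w-1}$: as forests, every vertex has exactly one out-neighbour inside the subgraph except $X_{w-1,w-1}$; as C-components, the hub block is a bidirected path and each $X_{k,k}$ is attached to it by its spoke. Since $X_{0,0}\in\bV(\cF)\setminus\bV(\cF')$, and since $X_{0,0}$ has no incoming edges so that deleting incoming edges to $\bX$ changes nothing while $X_{w-1,w-1}$ is an ancestor of itself, $(\cF,\cF')$ is a hedge and Lemma~\ref{lem:ID} gives unidentifiability; here $\dist(X_{0,0},X_{w-1,w-1})=w-1$.

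The hard part will be to show that for $\layer(Z)\le w-2$ there is \emph{no} hedge for $\{X_{0,0}\},\{Z\}$ in the finite segment $\cG_w[0,\layer(Z)]$, so that every such $P(Z\mid\Do X_{0,0})$ is identifiable (by Lemma~\ref{lem:ID} on that segment, which suffices as recalled at the start of Section~\ref{sec:results}). Suppose $(\cF,\cF')$ is such a hedge. Then $X_{0,0}$ is a non-root of $\cF$, so by (i) the chain of children of $X_{0,0}$ inside $\cF$ runs along the cascade and, staying below layer $w-1$, terminates at a root $R_1=X_{m,m}$ with $1\le m\le w-2$. Because $\cF$ is a C-component containing $X_{0,0},X_{1,1},\dots,X_{m,m}$, fact (i) forces $X_{w-1,0},\dots,X_{w-1,m}$ into $\cF$, and then fact (ii) forces $\cF\cap(\text{hub})$ to be a contiguous block $X_{w-1,0},\dots,X_{w-1,b}$ with $b\ge m$; its right end $R_2=X_{w-1,b}$ is also a root of $\cF$. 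The hedge condition places both $R_1$ and $R_2$ in $\An(Z,\cG_w)$. But every descendant of the hub vertex $R_2$ is a hub vertex, whereas the cascade descendants of $R_1=X_{m,m}$ first enter the hub at layer $w-1$; hence $Z=X_{w-1,t}$ with $t\ge w-1$, contradicting $\layer(Z)\le w-2$. This yields the theorem; in fact it gives identifiability for all $Z$ with $\dist(X_{0,0},Z)\le w-2$, which is stronger than the claimed $\tfrac{w}{3}-1$.

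I expect the decisive step to be the one just sketched: showing that \emph{every} C-forest through $X_{0,0}$ is forced to contain the whole initial hub block and therefore acquires a root deep in the future, ruling out any small confounded structure that the periodic repetition or the spokes might otherwise create. Facts (i)--(ii) are exactly what tame this; the only mildly delicate point among them is (ii) --- that a C-component meeting the hub at two times must contain the entire hub segment between them, because the non-hub vertices are dead ends. The remaining work is bookkeeping: tracking which vertices are forced into $\cF$ and locating the cascade- and hub-descendants of the two roots.
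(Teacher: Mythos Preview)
Your construction is correct and genuinely different from the paper's. The paper builds $\cG_w$ for $w=3k+1$ with a modular wrap-around structure: every vertex sends directed edges $0$ and $3$ rows down (mod $w$) and bidirected edges $1$ and $2$ rows down (mod $w$). Unidentifiability there relies on a diagonal path that wraps around the cylinder, and identifiability for $\dist\le w/3-1$ is obtained by observing that before the wrap-around can occur there are two consecutive rows that no ancestor of $Z$ can occupy, which severs all bidirected connections inside any candidate hedge. Your hub-and-spoke graph instead forces every C-forest through $X_{0,0}$ to contain both a cascade root $R_1=X_{m,m}$ and a hub root $R_2=X_{w-1,b}$; since the descendants of $R_2$ are confined to the hub while the cascade from $R_1$ reaches the hub only at layer $w-1$, no common target $Z$ with $\layer(Z)\le w-2$ exists. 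This argument is cleaner, works for every $w\ge 2$ rather than only $w\equiv 1\pmod 3$, and yields the sharper threshold $w-2$ in place of $w/3-1$, so your example in fact improves the lower bound on the constant $C$ by a factor of roughly three. The price is that your graph is less ``homogeneous'' (one distinguished hub row), whereas the paper's construction is vertex-transitive up to cyclic relabeling; but for the theorem as stated that is irrelevant.
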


\section{Proofs}

\subsection{Cutting hedges to show Propositions~\ref{prop:past}, \ref{prop:future}}

The proof of Proposition~\ref{prop:past} relies on the following idea: By Lemma~\ref{lem:ID} and time-directionality, we know that $P(\bY | \Do \bX)$ is unidentifiable if and only if there is a hedge for $\bX, \bY$ in $\cG[0, \tmax(\bY)]$. Now if this hedge exits the section $\cG[\tmin(\bX) - C, \tmax(\bY)]$ for a very large $C$, then it must be extremely stretched out. By periodicity of $\cG$, there must then be two columns of $\cG$ where the structure of the hedge is basically the same. We will show that cutting out the section of the graph between these two columns will result in a new hedge of smaller size. After doing this repeatedly, one eventually gets a hedge that is contained in $\cG[\tmin(\bX) - C, \tmax(\bY)]$. Proposition~\ref{prop:future} follows from the same idea, except that we are cutting out sections of the graph between $\bX$ and $\bY$ as long as $\bX$ and $\bY$ are too far away from each other. To formalize what we mean by the structure of the hedge in a given column, consider the following two definitions: 

\begin{definition}
Let $\cF \subseteq \cG$ be a subgraph of a periodic causal graph $\cG$. We say that vertices $X_{i,t}$ and $X_{i',t}$ are left-connected in $\cF$ if there exists a path of bidirected edges between them that goes only through vertices of $\cF$ that are of the form $X_{j,t'}, \; t' \leq t$. \end{definition}

\begin{definition} \label{def:functions}
For two sets of vertices $\bX, \bY \subseteq \cG$, let $(\cF, \cF')$ be a hedge for $\bX, \bY$. Let $\alpha_{\cF}(t)$ be the ordered partition of $\{0, \dots, w-1\}$ into $w+1$ blocks (some possibly empty), defined as follows: the first contains the row-indices of all vertices in $\bX_{*,t} \setminus \bV(\cF)$, while the subsequent blocks are the sets of row indices of the left-connected components of $\bX_{*,t} \cap \bV(\cF)$. If there are fewer than $w$ such blocks, then the final blocks of $\alpha_{\cF}(t)$ are empty. The blocks which correspond to left-connected components are sorted by the smallest index $i$ they contain. Likewise define $\alpha_{\cF'}(t)$. Finally, define $\beta(t) = \bX_{*, t} \cap \An(\bY, \cG_{\obX})$. 
\end{definition}

See Figure~\ref{left-conn} for an illustration of Definition~\ref{def:functions}. Next, we formalize what we mean by cutting out a segment of the graph. 

\begin{definition}
    We define the map $\Phi_{b,\Delta}: \cG \setminus \cG[b+1, b+\Delta] \to \cG$ to be the map that acts as the identity on $\cG[0, b]$ and translates $\cG[b+\Delta +1, \infty)$ by $-\Delta$ layers. That is, 
    \begin{align*}
    \Phi_{b, \Delta}(X_{i,t}) = \begin{cases}X_{i,t} \text{ if } t \leq b\\
    X_{i,t-\Delta} \text{ if } t > b+\Delta.\end{cases}
\end{align*}
\end{definition}

\begin{figure*}
\centering
\includegraphics[scale=1]{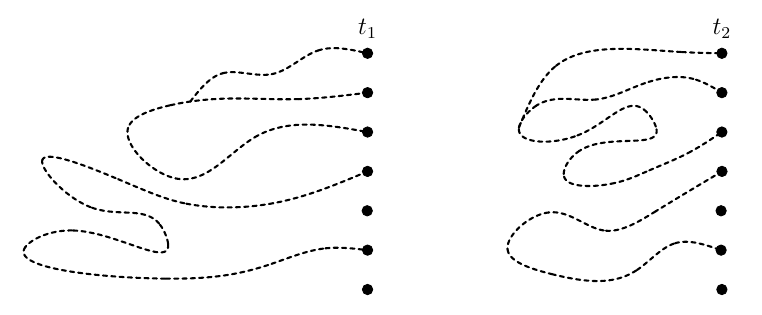}
\caption{Two levels $t_1$ and $t_2$ with the same left-connected structure. Assuming that indices range from $0$ to $6$ from top to bottom, and all vertices and paths are in $\cF$, we get $\alpha_\cF(t_1) = \alpha_\cF(t_2) = (\emptyset, \{1,2,3\}, \{4,6\}, \{5\}, \{7\})$.} \label{left-conn} 
\end{figure*}

The map $\Phi$ can be thought of as an operation that cuts the part $\cG[b+1, b+\Delta]$ out of the graph $\cG$ and then rewires the loose ends at layers $b$ and $b+\Delta+1$ together according to the periodic graph structure of $\cG$. The key property of $\Phi$ is the following: due to the periodic structure of $\cG$, whenever $(V, W)$ is a directed (resp. bidirected) edge for $V, W \in \bV(\cG \setminus \cG[b+1, b+\Delta])$, then $(\Phi(V), \Phi(W))$ is also a directed (resp. bidirected) edge in $\cG$ (and the converse almost holds, except if $V \in \bX_{*, b}$ and $W \in \bX_{*, b+\Delta+1})$. Here, we are assuming that $\cG$ is of latency $1$, and we will cover graphs with higher latency later. Our main technical result states that cutting at two layers on which the values of $\alpha_\cF, \alpha_{\cF'},$ and $\beta$ align for a hedge $(\cF, \cF')$ creates a new (and smaller) hedge: 

\begin{lemma}\label{lem:technical}
    Let $\cG$ be a periodic causal graph of latency $L=1$, and let $(\cF, \cF')$ be a hedge for the vertex sets $\bX, \bY \subseteq \cG$. Assume $\cG[b+1, b+\Delta] \cap (\bX \cup \bY) = \emptyset$, and we have $\alpha_\cF(b) = \alpha_\cF(b+\Delta), \alpha_{\cF'}(b) = \alpha_{\cF'}(b+\Delta),$ and $\beta(b) = \beta(b + \Delta)$. Then, $(\Phi_{b, \Delta}(\cF \setminus \cG[b+1, b+\Delta]), \Phi_{b, \Delta}(\cF' \setminus \cG[b+1, b+\Delta]))$ is a hedge for $\Phi_{b, \Delta}(\bX), \Phi_{b, \Delta}(\bY)$.
\end{lemma}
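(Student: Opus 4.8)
The plan is to verify that the four conditions of Definition~\ref{def:hedge} are inherited by the cut-and-rewired pair $(\tcF,\tcF')$, where $\tcF:=\Phi_{b,\Delta}(\cF\setminus\cG[b+1,b+\Delta])$ and $\tcF':=\Phi_{b,\Delta}(\cF'\setminus\cG[b+1,b+\Delta])$, for the sets $\tbX:=\Phi_{b,\Delta}(\bX)$ and $\tbY:=\Phi_{b,\Delta}(\bY)$, which are well defined since the cut region $\cG[b+1,b+\Delta]$ meets neither $\bX$ nor $\bY$. First I would pin down the effect of the cut on edges. On vertices, $\Phi:=\Phi_{b,\Delta}$ is an order-preserving bijection from $\bV(\cG)\setminus\bV(\cG[b+1,b+\Delta])$ onto $\bV(\cG)$ that fixes layers $\le b$, so by periodicity it carries every edge of $\cF$ with both endpoints outside the cut region onto an edge of $\cG$. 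The edges of $\cF$ destroyed by the cut are exactly those between layers $b$ and $b+1$ and those between layers $b+\Delta$ and $b+\Delta+1$; $\tcF$ reinstates a seam by adding, for every edge $(X_{i,b+\Delta},X_{j,b+\Delta+1})$ of $\cF$, the edge $(X_{i,b},X_{j,b+1})$ of the same type, which is an edge of $\cG$ by periodicity and whose tail lies in $\bV(\tcF)$ because $\alpha_\cF(b)=\alpha_\cF(b+\Delta)$ forces row $i$ to occur in $\cF$ at layer $b$. The analogous recipe, using $\alpha_{\cF'}(b)=\alpha_{\cF'}(b+\Delta)$, defines $\tcF'$.

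The bookkeeping conditions come for free. Condition~(1), $\tcF'\subseteq\tcF$, is immediate from $\cF'\subseteq\cF$ since the construction is applied verbatim to both. Condition~(2), $\tbX\subseteq\bV(\tcF\setminus\tcF')$, holds because $\bX$ lies outside the cut region and $\Phi$ restricts there to a faithful bijection sending $\bV(\cF)\setminus\bV(\cF')$ into $\bV(\tcF)\setminus\bV(\tcF')$. So the content is in condition~(3), that $\tcF,\tcF'$ are C-forests, and condition~(4), that they share a root set $\widetilde\bR$ with $\widetilde\bR\subseteq\An(\tbY,\cG_{\overline{\tbX}})$.

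For the C-component half of~(3), take two surviving vertices joined in $\cF$ by a bidirected path; the only obstruction to transporting it to $\tcF$ is that it may cross the cut region, which (by latency $1$) it does by leaving layer $b$, traversing layers $b{+}1,\dots,b{+}\Delta$, and re-emerging at layer $b{+}\Delta{+}1$. Since $\alpha_\cF(b)=\alpha_\cF(b+\Delta)$, the partition of the $\cF$-vertices of layer $b$ into left-connected components coincides with that of layer $b+\Delta$, so --- together with periodicity and the fact that $\cF$ itself is one C-component --- the connections realized ``beyond the cut'' are reproduced across the seam of $\tcF$, and every surviving vertex is still joined to layer $b$; hence $\tcF$, and likewise $\tcF'$, remains a single C-component. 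The forest half of~(3) is the crux and what I expect to be the main obstacle: the only vertices whose out-neighbourhood changes are those at layer $b$, where $X_{i,b}$ loses its children in the cut region and inherits the $\Phi$-images of the layer-$(b{+}\Delta{+}1)$ children of $X_{i,b+\Delta}$; one has to argue --- by carefully prescribing which seam edges are kept and using that $\cF$ is a forest at both layers $b$ and $b+\Delta$ together with $\alpha_\cF(b)=\alpha_\cF(b+\Delta)$ (and the analogues for $\cF'$) --- that no vertex thereby acquires two directed children, so that $\tcF$ and $\tcF'$ are forests.

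It then follows that the root sets match: a surviving vertex is a root of $\tcF$ exactly when it has no directed child there, and the seam analysis pairs these up with the roots of $\cF$, so $\cF,\cF'$ sharing $\bR$ forces $\tcF,\tcF'$ to share some $\widetilde\bR$. Finally, for $\widetilde\bR\subseteq\An(\tbY,\cG_{\overline{\tbX}})$, take a root $R$ and a directed path from $R$ to $\bY$ in $\cG_\obX$, which exists by condition~(4) for $(\cF,\cF')$. If the path avoids the cut region, its $\Phi$-image works; otherwise it meets layer $b$ at a vertex $W\in\bX_{*,b}\cap\An(\bY,\cG_\obX)=\beta(b)$, so $\beta(b)=\beta(b+\Delta)$ gives $W_{+\Delta}\in\An(\bY,\cG_\obX)$, and replacing the portion of the path past $W$ by the $(-\Delta)$-shift of a $\cG_\obX$-directed path from $W_{+\Delta}$ to $\bY$ (legitimate by periodicity) produces a directed path from $R$ to $\tbY$ in $\cG_{\overline{\tbX}}$; a small amount of extra care is needed at the boundary layers $b$ and $b+\Delta+1$, where vertices of $\bX$ or $\bY$ may sit. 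This gives condition~(4) and finishes the verification.
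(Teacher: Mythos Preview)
Your proposal follows essentially the same plan as the paper's proof: verify the four hedge conditions for $(\tcF,\tcF')$ one by one, using the matching of $\alpha_\cF$, $\alpha_{\cF'}$ and $\beta$ at layers $b$ and $b+\Delta$. Your handling of conditions~(1), (2) and of the ancestor clause in~(4) via $\beta(b)=\beta(b+\Delta)$ is essentially identical to the paper's, and your C-component argument has the same shape (the paper organizes it by first showing all $\tcF$-vertices at layer $b$ are C-connected, then routing arbitrary vertices to layer $b$, which is a cleaner decomposition than the one you sketch but not materially different).

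The one place where your sketch diverges is the forest property. You call it ``the crux'' and suggest it is handled by ``carefully prescribing which seam edges are kept'', hinting at a pruning step. The paper does not prune: it argues directly by contradiction. If some $\widetilde A\in\bV(\tcF)$ had two children $\widetilde B,\widetilde C$, then pulling back through $\Phi^{-1}$ either places $A,B,C$ on the same side of the cut (so $A$ has two children in $\cF$, contradiction), or forces $A\in\bX_{*,b}$ with both children coming from layer $b+\Delta+1$; in that case $\alpha_\cF(b)=\alpha_\cF(b+\Delta)$ puts $A_{+\Delta}\in\bV(\cF)$, and $B,C$ are both children of $A_{+\Delta}$ in $\cF$, again a contradiction. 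The same reasoning (swap ``has a child'' for ``has no child'') shows $\tcF$ and $\tcF'$ share roots. So the information recorded in $\alpha_\cF$ already suffices, and no selective retention of seam edges is needed. You have the right ingredients (``using that $\cF$ is a forest at both layers $b$ and $b+\Delta$ together with $\alpha_\cF(b)=\alpha_\cF(b+\Delta)$''); replacing the vague pruning idea by this pull-back contradiction would close the gap and match the paper's argument.
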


\begin{proof}
Fix $b$ and $\Delta$ satisfying the hypotheses of Lemma~\ref{lem:technical}. To simplify notation, let 
\begin{align*}
    \Phi &= \Phi_{b, \Delta},\\
    \tcF &= \Phi(\cF \setminus \cG[b+1, b+\Delta]),\\
    \tcF' &= \Phi(\cF' \setminus \cG[b+1, b+\Delta]).
\end{align*}

Following Definition~\ref{def:hedge}, it suffices to prove that the following statements hold:

\begin{enumerate}[label=(\alph*)]
    \item $\tcF' \subseteq \tcF$; \label{prop:0}
    \item $\Phi(\bX) \subseteq \bV(\tcF \setminus \tcF')$; \label{prop:a}
    \item $\tcF$ and $\tcF'$ are C-connected.\label{prop:b}
    \item Every vertex in $\tcF$ has at most one child. \label{prop:c}
    \item $\tcF$ and $\tcF'$ have the same set $\tbR$ of roots, and $\tbR \subseteq \An(\Phi(\bY), \cG_{\overline{\Phi(\cX)}})$. \label{prop:d}
\end{enumerate}

Statements \ref{prop:0} and \ref{prop:a} follow straight from the definitions and the fact that $\Phi$ is injective. To show Statement \ref{prop:b}, we first prove that any two vertices $X_{i,b}, X_{j,b} \in \bV(\tcF)$ on layer $b$ are C-connected in $\tcF$. Since $\alpha_\cF(b) = \alpha_\cF(b+\Delta)$, we know that $X_{i,b+\Delta}, X_{j, b+\Delta} \in \bV(\cF)$, so there must be a path of bidirected edges in $\cG$ that connects these two vertices and only goes through vertices of $\cF$. Suppose this path hits layer $b+\Delta$ exactly $m$ times at the vertices $X_{k_1, b+\Delta}, \dots, X_{k_m, b+\Delta}$ in this order, where $X_{k_1, b+\Delta} = X_{i, b+\Delta}$ and $X_{k_m, b+\Delta} = X_{j, b+\Delta}$. We claim that then there is a path of bidirectional edges connecting $X_{k_1, b}, \dots, X_{k_m, b}$ in $\tcF$. Observe that, since $\cG$ has latency $1$, for any $\ell \in \{1, \dots, m-1\}$ the path from $X_{k_\ell, b+\Delta}$ to $X_{k_{\ell+1}, b+\Delta}$ in $\cF$ is either completely contained in $\cG[0, b+\Delta]$ or in $\cG[b+\Delta, \infty)$. In the first case, $X_{k_\ell, b+\Delta}$ and $X_{k_{\ell+1}, b+\Delta}$ are left-connected in $\cF$. Hence, $X_{k_\ell, b}$ and $X_{k_{\ell+1}, b}$ are also left-connected in $\cF$, but to the left of layer $b$, we have $\cF \cap \cG[0,b] = \tcF \cap \cG[0,b]$, so both vertices are also left-connected in $\tcF$. In the second case, if the path from $X_{k_\ell, b+\Delta}$ to $X_{k_{\ell+1}, b+\Delta}$ in $\cF$ stays to the right of layer $b+\Delta$, then we can just translate it by $-\Delta$ layers to get a path from $X_{k_\ell, b}$ to $X_{k_{\ell+1}, b}$ in $\tcF$. Hence, $X_{i, b}$ and $X_{j,b}$ are C-connected in $\tcF$. Now we show that any two vertices $\widetilde{A}, \widetilde{B} \in \bV(\tcF)$ are C-connected in $\tcF$. Consider the path $\pi$ of bidirectional edges between $A = \Phi^{-1}(\widetilde{A})$ and $B = \Phi^{-1}(\widetilde{B})$ that goes through $\cF$. If the path $\pi$ never hits layer $b$ or $b+\Delta$, then $\Phi(\pi)$ is a path between $\widetilde{A}$ and $\widetilde{B}$ (it either stays the same or it gets translated as a whole). Otherwise, let $A'$ and $B'$ be the first vertices that are on layer $b$ or $b+\Delta$ and get hit by the path $\pi$ when starting from $A$ or $B$ respectively. Let $\pi_1$ denote the path from $A$ to $A'$ and $\pi_2$ denote the path from $B$ to $B'$. Then $\Phi(\pi_1)$ is a path from $\widetilde{A}$ to $\Phi(A')$ in $\tcF$ and $\Phi(\pi_2)$ is a path from $\widetilde{B}$ to $\Phi(B')$ in $\tcF$. Both, $\Phi(A'), \Phi(B') \in \bV(\tcF)$ are in layer $b$, so they must also be C-connected in $\tcF$. Hence, $\widetilde{A}$ and $\widetilde{B}$ are C-connected in $\tcF$. This shows $\tcF$ is C-connected and the same statement follows for $\tcF'$ analogously.

To show Statement \ref{prop:c}, assume that $\widetilde{A} \in \bV(\tcF)$ has at least two children $\widetilde{B}, \widetilde{C}$ in $\tcF$. Let $A = \Phi^{-1}(\widetilde{A})$, $B = \Phi^{-1}(\widetilde{B})$, and $C = \Phi^{-1}(\widetilde{C})$. We know that $A,B,C \in \bV(\cF)$. Moreover, if $A, B, C$ are fully contained in $\cG[0,b]$ or $A, B, C$ are fully contained in $\cG[b+\Delta+1, \infty)$, then $B,C$ are children of $A$, which is a contradiction to $\cF$ being part of a hedge. Hence, we must have $A \in \bX_{*, b}$ and $B, C \in \bX_{*, b+\Delta+1}$. Let $A_{+\Delta} \in \bX_{b+\Delta}$ be the vertex $A$ shifted $\Delta$ layers to the right. Then, $B$ and $C$ are children of $A_{+\Delta}$ but since $\alpha_\cF(b) = \alpha_cF(b+\Delta)$, we have $A_{+\Delta} \in \bV(\cF)$. This is again a contradiction to $\cF$ being part of a valid hedge and shows the statement. 

To prove Statement \ref{prop:d}, observe first that one can show $\tcF$ and $\tcF'$ have the same set $\tbR$ of roots using the same logic as the proof of Statement~\ref{prop:c}: Whenever there is a vertex $\widetilde{R}$, which is a root of $\tcF$ but not of $\tcF'$ or vice-versa, one can map it back (and potentially switch from layer $b$ to $b+\Delta$) to find a vertex $R$, which is a root of $\cF$ but not of $\cF'$ or vice-versa, leading to a contradiction. Hence, it only remains to show that $\tbR \subseteq \An(\Phi(\bY), \cG_{\overline{\Phi(\bX)}})$. Consider a vertex $\widetilde{R} \in \tbR$ and its preimage $R = \Phi^{-1}(\widetilde{R}) \in \bR$, where $\bR$ is the set of roots of $\cF, \cF'$. We know that there is a directed path $\pi$ in $\cG_{\overline{X}}$ from $R$ to some vertex $Y \in \bY$. If $\pi$ does not hit layer $b$, then it cannot hit the region $\cG[b,b+\Delta]$ either, so $\Phi(\pi)$ is a directed path in $\cG_{\overline{\Phi(\bX)}}$ from $\widetilde{R}$ to $\Phi(Y) \in \Phi(\bY)$. Otherwise, let $X_{i,b}$ be the first vertex that $\pi$ hits in layer $b$. Since $X_{i,b}$ is an ancestor of $\bY$ in $\cG_{\overline{X}}$ and $\beta(b) = \beta(b+\Delta)$, we know that there is a directed path $\tau$ in $\cG_{\overline{X}}$ from the vertex $X_{i,b+\Delta}$ to some vertex $Y'$ in $\bY$. Hence, following the path $\Phi(\pi)$ from $\widetilde{R}$ to $X_{i,b}$ and the path $\Phi(\tau)$ from $X_{i,b}$ to $\Phi(Y')$ results in a path from $\widetilde{R}$ to $\Phi(\bY)$ in $\cG_{\overline{\Phi(\bX)}}$. This completes the proof of Lemma~\ref{lem:technical}.
\end{proof}

Now we show how Lemma~\ref{lem:technical} implies Proposition~\ref{prop:past}.

\begin{proof}(of Proposition~\ref{prop:past}).
    Let $\cG$ be a periodic causal graph of width $w$ and latency $L$, and $\bX, \bY \subseteq \bV(\cG)$. Wlog we may assume that $\tmin(\bX) \leq \tmin(\bY)$. By Lemma~\ref{lem:ID}, it suffices to show that if there exists a hedge for $\bX, \bY$, then there is also a hedge for $\bX, \bY$ in $\cG[\tmin(\bX) - C, \tmax(\bY)]$. Note that, by time-directionality, any hedge for $\bX, \bY$ is contained in $\cG[0, \tmax(\bY)]$. Now, to be able to apply Lemma~\ref{lem:technical}, we transform $\cG$ into a periodic causal graph $\cH$ of width $Lw$ and latency $1$, by simply relabeling the vertices of $\cG$ such that layer $t$ of $\cH$ contains the vertices of the layers $Lt, Lt+1, \dots, Lt +t-1$ of $\cG$ (that is, we aggregate each $L$ consecutive time steps of $\cG$). Let $\bX', \bY'$ be the sets in $\bV(\cH)$ corresponding to $\bX, \bY$. Since the graphs $\cG$ and $\cH$ are isomorphic, if there exists a hedge for $\bX, \bY$ in $\cG$, then there exists a hedge for $\bX', \bY'$ in $\cH$. Let $(\cF, \cF')$ such a hedge with maximal $\tmin(\cF)$. We claim that $\tmin(\bX') - \tmin(\bV(\cF)) \leq C/L = 2^{Lw}(Lw+1)^{2Lw+2}$. 

    Suppose not, and consider the functions $\alpha_\cF(t)$, $\alpha_{\cF'}(t)$ and $\beta(t)$. Since $\cH$ has width $Lw$, both the partitions $\alpha_\cF(t)$ and $\alpha_{\cF'}(t)$ can attain at most $(Lw+1)^{Lw+1}$ values, while the function $\beta(t)$ can attain at most $2^{Lw}$ values. Hence, by the pigeonhole principle, there must exist some layers $b$ and $b + \Delta$, such that $\tmin(\cF) \leq b < b+\Delta < \tmin(\bX')$ and $\alpha_\cF(b) = \alpha_\cF(b+\Delta)$, $\alpha_{\cF'}(b) = \alpha_{\cF'}(b+\Delta)$, and $\beta(b) = \beta(b+\Delta)$. Now, by Lemma~\ref{lem:technical}, there exists a hedge $(\tcF, \tcF')$ for $\bX'_{-\Delta}, \bY'_{-\Delta}$ that has $\tmin(\tcF) = \tmin(\cF)$. After shifting this hedge by $\Delta$ layers to the right, we get another hedge for $\bX', \bY'$ with strictly larger minimum layer than $\tmin(\cF)$. This is a contradiction to our minimality assumption, and hence $(\cF, \cF')$ must have been contained in $\cH[\tmin(\bX') - C/L, \tmax(\bY')]$. After reversing the aggregation of layers, we find a corresponding hedge for $\bX, \bY$ contained in $\cG[\tmin(\bX) - C, \tmax(\bY)]$, which completes the proof. 
    
\end{proof}

The proof of Proposition~\ref{prop:future} is analogous, except that we are using the pidgeonhole principle on the layers between $\tmax(\bX)$ and $\tmin(\bY)$ to apply Lemma~\ref{lem:technical}.

\subsection{Constructing examples to show Theorem \ref{thm:lowerbound}}

The goal of this section is to construct periodic causal models $\cG_w$ that satisfy the statement of Theorem~\ref{thm:lowerbound}. For this, it is enough to define an appropriate acyclic graph $\cA_w$ on $\bX_{*,0}\cup \bX_{*,1}$. We can extend $\cA_w$ to a periodic, latency $1$ graph $\cG_w$ by including a directed (resp.\ bidirected) edge $(X_{i,t},X_{i',t'})$ in $\cG_w$ if $(X_{i,0},X_{i',t'-t})$ is in $\cA_w$. 

Specifically, let $w = 3k+1$ and define $\cG_w$ to have (see Figure~\ref{Hw-fig}):
\begin{itemize}
    \item Directed edges from every $X_{i,t}$ to $X_{i, t+1}$ and to $X_{i+3, t+1}$.
    \item Bidirected edges from every $X_{i,t}$ to $X_{i+1, t+1}$ and to $X_{i+2, t+1}$.
    \item In both cases the index $i$ ranges from $0$ to $w-1$ and we use addition modulo $w$. 
\end{itemize}

usetikzlibrary{arrows.meta}

\begin{figure}[ht]
\centering
\begin{tikzpicture}[line cap=round,line join=round,>=Latex,x=1cm,y=1cm, scale=0.8]
%\clip(-12,-10.6) rectangle (8,4);
\draw [->, line width=1pt,] (-4,2) -- (0,2);
\draw [->, line width=1pt] (-4,2) -- (0,-4);
\draw [line width=1pt, dash pattern=on 3pt off 3pt] (-4,2) -- (0,0);
\draw [line width=1pt, dash pattern=on 3pt off 3pt] (-4,2) -- (0,-2);
\draw [line width=1pt, dash pattern=on 3pt off 3pt] (-4,0) -- (0,-2);
\draw [line width=1pt, dash pattern=on 3pt off 3pt] (-4,0) -- (0,-4);
\draw [line width=1pt, dash pattern=on 3pt off 3pt] (-4,-2) -- (0,-4);
\draw [line width=1pt, dash pattern=on 3pt off 3pt] (-4,-2) -- (0,-6);
\draw [line width=1pt, dash pattern=on 3pt off 3pt] (-4,-4) -- (0,-6);
\draw [line width=1pt, dash pattern=on 3pt off 3pt] (-4,-4) -- (0,-8);
\draw [line width=1pt, dash pattern=on 3pt off 3pt] (-4,-6) -- (0,-8);
\draw [line width=1pt, dash pattern=on 3pt off 3pt] (-4,-6) -- (0,-10);
\draw [line width=1pt, dash pattern=on 3pt off 3pt] (-4,-8) -- (0,-10);
\draw [line width=1pt, dash pattern=on 3pt off 3pt] (-4,-8) -- (0,2);
\draw [line width=1pt, dash pattern=on 3pt off 3pt] (-4,-10) -- (0,2);
\draw [line width=1pt, dash pattern=on 3pt off 3pt] (-4,-10) -- (0,0);
\draw [->, line width=1pt] (-4,-10)-- (0,-2);
\draw [->, line width=1pt] (-4,-8)-- (0,0);
\draw [->, line width=1pt] (-4,-6)-- (0,2);
\draw [->, line width=1pt] (-4,-4)-- (0,-10);
\draw [->, line width=1pt] (-4,-2)-- (0,-8);
\draw [->, line width=1pt] (-4,0)-- (0,-6);
\draw [->, line width=1pt] (-4,-10)-- (0,-10);
\draw [->, line width=1pt] (-4,-8)-- (0,-8);
\draw [->, line width=1pt] (-4,-6)-- (0,-6);
\draw [->, line width=1pt] (-4,-4)-- (0,-4);
\draw [->, line width=1pt] (-4,-2)-- (0,-2);
\draw [->, line width=1pt] (-4,0)-- (0,0);
\begin{scriptsize}
\draw [fill=black] (-4,2) circle (3pt);
\draw[color=black] (-4.6,2) node {\normalsize $X_{0,0}$};
\draw [fill=black] (0,2) circle (3pt);
\draw[color=black] (0.6,2) node {\normalsize $X_{0,1}$};
\draw [fill=black] (-4,0) circle (3pt);
\draw[color=black] (-4.6,0) node {\normalsize $X_{1,0}$};
\draw [fill=black] (0,0) circle (3pt);
\draw[color=black] (0.6,0) node {\normalsize $X_{1,1}$};
\draw [fill=black] (-4,-2) circle (3pt);
\draw[color=black] (-4.6,-2) node {\normalsize $X_{2,0}$};
\draw [fill=black] (0,-2) circle (3pt);
\draw[color=black] (0.6,-2) node {\normalsize $X_{2,1}$};
\draw [fill=black] (-4,-4) circle (3pt);
\draw[color=black] (-4.6,-4) node {\normalsize $X_{3,0}$};
\draw [fill=black] (0,-4) circle (3pt);
\draw[color=black] (0.6,-4) node {\normalsize $X_{3,1}$};
\draw [fill=black] (-4,-6) circle (3pt);
\draw[color=black] (-4.6,-6) node {\normalsize $X_{4,0}$};
\draw [fill=black] (0,-6) circle (3pt);
\draw[color=black] (0.6,-6) node {\normalsize $X_{4,1}$};
\draw [fill=black] (-4,-8) circle (3pt);
\draw[color=black] (-4.6,-8) node {\normalsize $X_{5,0}$};
\draw [fill=black] (0,-8) circle (3pt);
\draw[color=black] (0.6,-8) node {\normalsize $X_{5,1}$};
\draw [fill=black] (-4,-10) circle (3pt);
\draw[color=black] (-4.6,-10) node {\normalsize $X_{6,0}$};
\draw [fill=black] (0,-10) circle (3pt);
\draw[color=black] (0.6,-10) node {\normalsize $X_{6,1}$};
\end{scriptsize}
\end{tikzpicture}

\caption{The graph $\cA_7$.} \label{Hw-fig} 
\end{figure}
 
First, we claim that in $\cG_w$, the causal effect $P(X_{w-1,w-2}|\text{ do } X_{0,0})$ is unidentifiable. 
To show this, we construct a hedge for $X_{0,0}, X_{w-1,w-2}$. Our hedge consists of the following sets of vertices (see Figure~\ref{hedge-fig})
\begin{align*}
 \cF = \{&X_{0,0}, X_{0,1}, X_{1,1}, X_{w-3,0},X_{w-2,0}, X_{w-1,0} , X_{w-1,1} \}\\
 \cF' = \{&X_{0,1}, X_{1,1}, X_{w-3,0}, X_{w-2,0}, X_{w-1,0}, X_{w-1,1} \}    
\end{align*}

\begin{figure*}[ht]
\centering
\begin{tikzpicture}[line cap=round,line join=round,>=triangle 45, scale=2.7]
%\clip(-1.5,-0.5) rectangle (6.37569377641197,4.0601432669475885);
\draw [->, line width=1pt] (0,3) -- (1,3);
\draw [->, line width=1pt,color=aqaqaq] (1,3) -- (2,3);
\draw [->, line width=1pt] (0,0) -- (1,0);
\draw [->, line width=1pt,color=aqaqaq] (1,3) -- (2,1.5);
\draw [->, line width=1pt] (0,0.5) -- (1,2.5);
\draw [->, line width=1pt] (0,1) -- (1,3);
\draw [line width=1pt,dash pattern=on 3pt off 3pt] (0,3)-- (1,2.5);
\draw [line width=1pt,dash pattern=on 3pt off 3pt] (0,1)-- (1,0);
\draw [line width=1pt,dash pattern=on 3pt off 3pt] (0,0.5)-- (1,3);
\draw [line width=1pt,dash pattern=on 3pt off 3pt] (0,0)-- (1,3);
\draw [line width=1pt,dash pattern=on 3pt off 3pt] (0,0)-- (1,2.5);
\draw [->, line width=1pt,color=aqaqaq] (1,2.5) -- (2,1);
\draw [->, line width=1pt,color=aqaqaq] (3,3) -- (4,1.5);
\draw [->, line width=1pt,color=aqaqaq] (2,3) -- (3,1.5);
\draw [->, line width=1pt,color=aqaqaq] (4,1.5) -- (5,0);
\draw [->, line width=1pt,color=aqaqaq] (3,1.5) -- (4,0);
\draw [->, line width=1pt,color=aqaqaq] (2,1.5) -- (3,0);
\draw [->, line width=1pt,color=aqaqaq] (1,0) -- (2,0);
\draw [->, line width=1pt,color=aqaqaq] (2,0) -- (3,0);
\draw [->, line width=1pt,color=aqaqaq] (3,0) -- (4,0);
\draw [->, line width=1pt,color=aqaqaq] (4,0) -- (5,0);
\draw [->, line width=1pt,color=aqaqaq] (2,1) -- (3,3);
\draw [line width=1pt,dash pattern=on 3pt off 3pt] (0,0.5)-- (1,0);
\begin{scriptsize}
\draw [fill=black] (0,3) circle (1.5pt);
\draw[color=black] (0,3.2) node {\normalsize $X_{0,0}$};
\draw [fill=black] (1,3) circle (1.5pt);
\draw[color=black] (1,3.2) node {\normalsize $X_{0,1}$};
\draw [fill=black] (1,2.5) circle (1.5pt);
\draw[color=black] (1.0684804377189951,2.641852877593049) node {\normalsize $X_{1,1}$};
\draw [color=black] (2,3) circle (1.5pt);
\draw [fill=black] (0,1) circle (1.5pt);
\draw[color=black] (-0.35,1) node {\normalsize $X_{w-3,0}$};
\draw [fill=black] (0,0.5) circle (1.5pt);
\draw[color=black] (-0.35,0.5) node {\normalsize $X_{w-2,0}$};
\draw [fill=black] (0,0) circle (1.5pt);
\draw[color=black] (-0.35,0) node {\normalsize $X_{w-1,0}$};
\draw [fill=black] (1,0) circle (1.5pt);
\draw[color=black] (1.1025322189938356,-0.2) node {\normalsize $X_{w-1,1}$};
\draw [color=black] (2,1.5) circle (1.5pt);
\draw [color=black] (2,1) circle (1.5pt);
\draw [color=black] (3,1.5) circle (1.5pt);
\draw [color=black] (4,1.5) circle (1.5pt);
\draw [color=black] (3,3) circle (1.5pt);
\draw [color=black] (2,0) circle (1.5pt);
\draw [color=black] (3,0) circle (1.5pt);
\draw [color=black] (4,0) circle (1.5pt);
\draw [color=black] (5,0) circle (1.5pt);
\draw[color=black] (5.1,-0.2) node {\normalsize $X_{w-1, 1+2(w-1)/3} $};
\end{scriptsize}
\end{tikzpicture}

\caption{Hedge for $w=7$} \label{hedge-fig} \end{figure*}
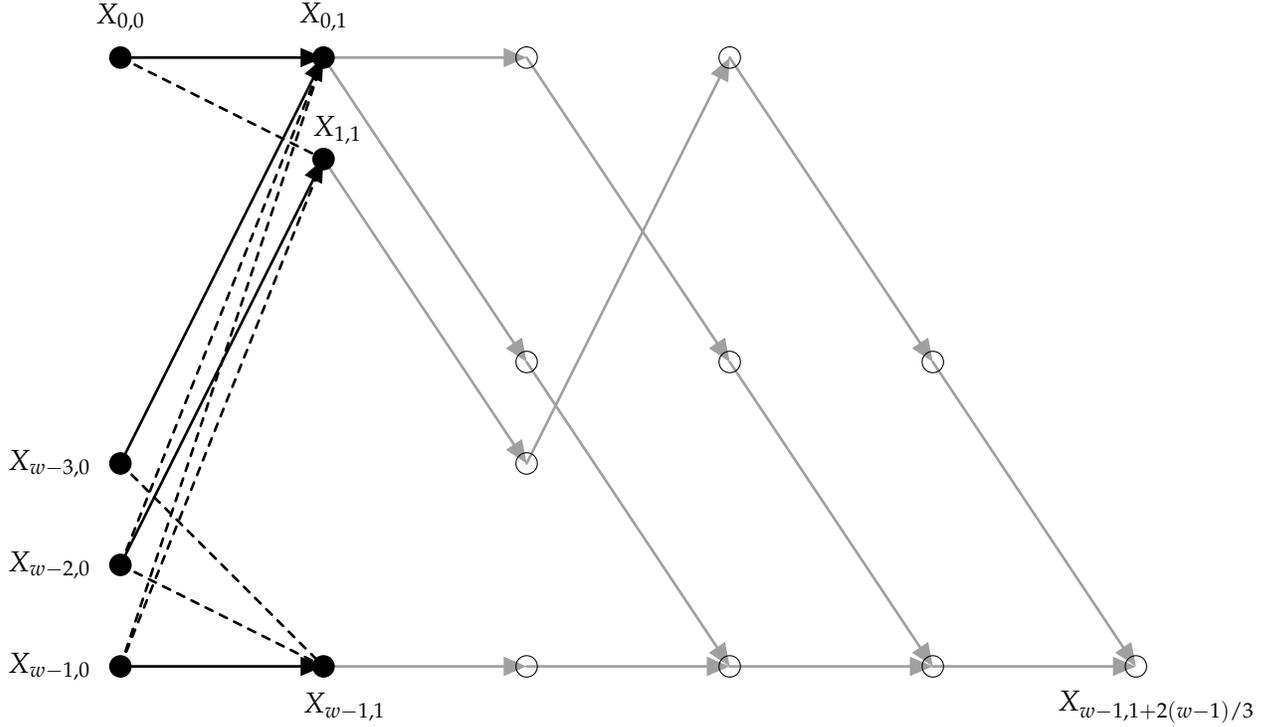

Both $\cF$ and $\cF'$ have the same set of roots $\bR = \{X_{0,1}, X_{1,1}, X_{w-1,1} \}.$ We now claim that there are directed paths from each of these roots to our effect vertex $X_{w-1,w-2}.$ Clearly, since we have directed edges from $X_{i,j}$ to $X_{i,j+1},$ there is a (``horizontal'') directed path from $X_{w-1,1}$ to $X_{w-1,w-2}$. Additionally, since $w$ is of the form $3k+1,$ $X_{0,1}$ has a directed (``diagonal'') path to the directed chain of vertices in row $w-1$ (meeting it at the vertex $X_{w-1,(w-1)/3+1}$), and therefore, it has a directed path to $X_{w-1,w-2}.$ Finally, from $X_{1,1}$ there is a diagonal path of length $2(w-1)/3$ which wraps around (vertically) until it reaches the vertex $X_{w-1,2(w-1)/3+1}$, from which a path continues horizontally to $X_{w-1,w-2}$. (This last wrap-around path is key to the construction of the hedge. Indeed, if the graph were constructed with the same edge types but without vertical coordinates being taken modulo $w$,  $\Pr(X_{j,t} \vert  \text{ do } X_{0,0})$ would be identifiable for all $X_{j,t}$.) Finally, it is easy to verify that the bidirected edges make both $\cF$ and $\cF'$ $C$-connected; this is illustrated for $w=7$ in Figure~\ref{hedge-fig}.

Due to the existence of this hedge, $\Pr(X_{w-1,w-2} \vert \text{ do } X_{0,0})$ is not identifiable. We now prove that for every $t \leq \frac{w}{3}-1$ and every $j$, $P(X_{j,t}|\text{ do } X_{0,0})$ is identifiable.

First, assume $j$ is not divisible by $3$. In this case, there is no directed path from $X_{0,0}$ to $X_{j,t}$. This can be seen as all directed edges vertically descend by either $0$ rows or $3$ rows, so all descendants of $X_{0,0}$ have a first index that is divisible by $3$ up to time $\frac{w-1}{3}$. We deduce that the causal effect $P(X_{j,t} | \Do X_{0,0})$ is trivial, and hence identifiable. 

It remains to consider the vertices $X_{j,t}$ with $3\vert j$. The hedge for such a vertex must consist solely of ancestors of $X_{j,t}$, which are vertices of the form $X_{j',t'}$ with $t'\leq t$, and $j'$ satisfying either $0\leq j' \leq j$, or else $w+j-3(t-t')\leq j' \leq w-1$.

As a consequence, there are two rows, namely the rows indexed $j+1$ and $j+2$, which cannot contain the vertices of a hedge for $X_{j,t}$. Therefore (after re-indexing the rows to start from $j+1$ and end at $j$), we can consider all directed and bidirected edges among hedge vertices to occur within a vertical segment, all directed edges skipping $0$ or $3$ rows, and all bidirected edges skipping $1$ or $2$ rows. The bidirected edges therefore can never connect two hedge vertices. The larger part of the hedge, $\cF$, must contain more than one vertex. It therefore cannot be $C$-connected. Consequently, there is no hedge, so $P(X_{j,t} | \Do X_{0,0})$ is identifiable. This proves Theorem~\ref{thm:lowerbound}.

\section{Discussion} \label{sec:discussion}
There remains an exponential gap between our upper and lower bounds on the number $C$ of layers that are necessary to consider when deciding identifiability of a causal effect in a time series graph. We suspect that the lower bound is closer to the truth, but further work is needed to confirm this suspicion. If this can be done, it will make the result much more attractive computationally. Note that our upper bound on $C$ is for graphs of generic periodic structure. It would be an interesting question of whether the bound can be improved given certain structural conditions on the periodic causal graph.

Our setting is also generic in that we do not impose time-invariance on the functional dependencies of the underlying causal system. This is why our results focus on deciding whether causal effects are identifiable rather than identifying them, as we do not expect a non-trivial speed-up for the latter (at least not in the sense of part~\ref{item:b} of Theorem~\ref{thm:main}). Efficient identification in time series graphs, potentially under the additional assumption of time-invariant dynamics, is therefore still a subject for further research.

A separate topic of interest is the condition number of the Causal ID mapping in periodic graphs (see \citet{SchSriv16} for a definition). When one tries to translate an in-principle ``perfect statistics'' result into an actual application with empirical data, the sample size scales roughly proportionally to this condition number. It is known that, in general, the condition number can vary widely between different graphs: it can be exponential in the size of the graph~\citep{SchSriv16}, but on the other hand is also known to scale only as $\exp(c \log c)$ if the $C$-components of the graph are of size bounded by $c$~\citep{gordonKSS21}. It seems quite likely that the periodic structure should impose further constraints that will help to upper bound the condition number, but this is at present an entirely open question.

\acks{This work was supported by NSF grant CCF-2321079.}

\bibliography{refs}

\end{document}